\def\eqref#1{equation~\ref{#1}}
\def\1{\bm{1}}
\DeclareMathAlphabet{\mathsfit}{\encodingdefault}{\sfdefault}{m}{sl}
\SetMathAlphabet{\mathsfit}{bold}{\encodingdefault}{\sfdefault}{bx}{n}
\newcommand{\softmax}{\mathrm{softmax}}
\newtheorem{theorem}{Theorem}%
\crefname{section}{Sec.}{Sec.}
\crefname{figure}{Fig.}{Figs.}
\crefname{equation}{Eq.}{Eqs.}
\crefname{appendix}{App.}{Apps.}
\crefname{table}{Tab.}{Tabs.}
\crefname{algorithm}{Alg.}{Algs.}
\crefname{theorem}{Theorem}{Theorems}
\title{Discrete Bayesian Sample Inference for Graph Generation}
\author{%
  Ole Petersen\thanks{Equal contribution}$\enspace${\normalfont\textsuperscript{1,4}}
  \quad Marcel Kollovieh\footnotemark[1]$\enspace${\normalfont\textsuperscript{1,2,3}}
  \quad Marten Lienen{\normalfont\textsuperscript{1,3}}
  \quad Stephan Günnemann{\normalfont\textsuperscript{1,2,3}}\\[10pt]
  \textsuperscript{1} School of Computation, Information and Technology, Technical University of Munich \\
  \textsuperscript{2} Munich Center for Machine Learning 
  \quad \textsuperscript{3} Munich Data Science Institute 
   \quad \textsuperscript{4} Listen Labs \\[6pt]
  Correspondence to: \texttt{m.kollovieh@tum.de}
}
\newcommand{\cat}{\mathrm{Cat}}
\newcommand{\x}{\mathbf{x}}
\newcommand{\y}{\mathbf{y}}
\newcommand{\z}{\mathbf{z}}
\newcommand{\s}{\mathbf{s}}
\newcommand{\muzero}{\ensuremath{\boldsymbol{\mu}_0}}
\newcommand{\bmu}{\ensuremath{\boldsymbol{\mu}}}
\DeclareMathOperator*{\EE}{\mathbb{E}}
\begin{document}

\maketitle

\begin{abstract}
Generating graph-structured data is crucial in applications such as molecular generation, knowledge graphs, and network analysis. However, their discrete, unordered nature makes them difficult for traditional generative models, leading to the rise of discrete diffusion and flow matching models. In this work, we introduce \emph{GraphBSI}, a novel one-shot graph generative model based on Bayesian Sample Inference (BSI). Instead of evolving samples directly, GraphBSI iteratively refines a \emph{belief} over graphs in the continuous space of distribution parameters, naturally handling discrete structures. Further, we state BSI as a stochastic differential equation (SDE) and derive a noise-controlled family of SDEs that preserves the marginal distributions via an approximation of the score function. Our theoretical analysis further reveals the connection to Bayesian Flow Networks and Diffusion models. Finally, in our empirical evaluation, we demonstrate state-of-the-art performance on molecular and synthetic graph generation, outperforming existing one-shot graph generative models on the standard benchmarks Moses and GuacaMol.

\end{abstract}
\section{Introduction}
\label{sec:introduction}
Graph structures appear in various domains ranging from molecular chemistry to transportation and social networks. Generating realistic graphs enables simulation of real-world scenarios, augmenting incomplete datasets, and discovering new materials and drugs \citep{guo2022systematicsurveydeepgenerative,zhu2022surveydeepgraphgeneration}. However, their unique and complex structure poses challenges to traditional generative models that are designed for continuous data such as images. This has resulted in a diverse landscape of graph generative models, featuring autoregressive models \citep{you2018graphrnngeneratingrealisticgraphs} and one-shot models \citep{kipf2016variationalgraphautoencoders}, including a range of diffusion-based models \citep{ho2020denoisingdiffusionprobabilisticmodels}.

Recently, Bayesian Flow Networks (BFNs) \citep{graves2025bayesianflownetworks} have emerged as a novel class of models that operate on the parameters of a distribution over samples rather than on the samples themselves. This approach is particularly appealing for discrete data, as the parameters of a probability distribution evolve smoothly even when the underlying samples remain discrete. Graph generative models based on BFNs have shown competitive performance in molecule generation \citep{song2025smooth}. However, operating in parameter space and being motivated through information theory adds a layer of complexity to the BFN framework that hinders its accessibility.

\emph{Bayesian Sample Inference} (BSI) \citep{lienen2025generativemodelingbayesiansample} offers a simplified interpretation and generalizes continuous BFNs by viewing generation as a sequence of Bayesian updates that iteratively refine a belief over the unknown sample. The model is trained by optimizing its corresponding ELBO.

This work introduces \textbf{GraphBSI}, extending BSI to discrete graphs. Instead of operating on discrete states, GraphBSI evolves on the probability simplex of node and edge categories. We derive BSI for categorical data and show how to generate variably-sized graphs with it. Next, we formulate categorical BSI as an SDE and, via the Fokker–Planck equation, derive a noise-controlled family of SDEs that preserves marginals while interpolating between a deterministic probability-flow ODE and a highly stochastic sampler. Empirically, we demonstrate that GraphBSI achieves state-of-the-art results on the GuacaMol \citep{guacamol} and Moses \citep{moses} benchmarks for molecule generation. In extensive ablation studies, we show that noise control is a crucial factor for optimizing performance. An overview of our method is shown in \cref{fig:main_plot}.

Our \textbf{main contributions} can be summarized as follows:
\begin{itemize}
  \item We derive BSI for categorical data, enabling, among others, the generation of graphs and sequences. The result generalizes the Bayesian Flow Network (BFN) framework with a simplified interpretation while avoiding limit approximations in the Bayesian update.
  \item We formulate categorical BSI as an SDE. Through the Fokker-Planck equation, we derive a generalized SDE with a noise-controlling parameter and identical marginals, allowing us to interpolate between a deterministic probability flow ODE and a sampling scheme that overrides all previous predictions with the most recent one.
  \item We demonstrate that GraphBSI achieves SOTA results across most metrics in the Moses and GuacaMol molecule generation benchmarks with as few as 50 function evaluations, and further gains substantial improvements with 500 function evaluations.
\end{itemize}

\section{The Bayesian Sample Inference Framework for Graphs}
\label{sec:bsi_framework}

\begin{figure}[t]
    \centering
    \includegraphics[width=\textwidth]{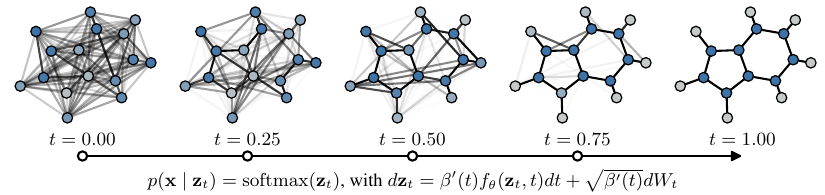}
    \caption{Illustration of GraphBSI's generative process. Nodes and edges are modeled as independent categorical variables. One edge-type is used to represent the non-existence of an edge. The latent variable $\mathbf{z}_t$ represents a \textit{distribution over graphs} rather than a graph itself. The neural network $f_\theta$ smoothly steers this distribution from a random initial distribution $\z_0$ to a distribution concentrated on valid graphs $\z_1$, which is modeled as a Stochastic Differential Equation (SDE).}
    \label{fig:main_plot}
\end{figure}

Bayesian Sample Inference (BSI)~\citep{lienen2025generativemodelingbayesiansample} is a novel generative modeling framework simplifying and generalizing Bayesian Flow Networks (BFNs)~\citep{graves2025bayesianflownetworks}. While BSI was originally presented for continuous data, we develop a theoretical framework extending BSI to categorical data analogously. We start by introducing the required background knowledge. All proofs are shown in \cref{app:proofs}.

\paragraph{Background.} Bayesian Sample Inference (BSI) \citep{lienen2025generativemodelingbayesiansample} generates samples by iteratively refining an initial belief $p(\x)$ about the sample $\x$ to be generated through noisy measurements $\y$ of $\x$. The initial belief $p(\x \mid \z_0)$ follows a broad isotropic Gaussian with parameters $\z_0=(\muzero,\sigma_0)$. The belief is then refined by a sequence of noisy measurements $\y_0,\dots,\y_{k-1}$ that follow Gaussians centered around $\x$. After receiving the measurement $\y_i$, the information contained in it is integrated into our next belief $\z_{i+1}$ through a Bayesian update. Once the belief of $\x$ is sufficiently sharp, we return a sample from it.
We train a neural network $f_\theta$ to predict the train sample $\x$ from the information collected about it in the belief $\z_i$ for each timestep $i \in {0,\dots,k-1}$. The trained neural network allows us to generate new samples during inference by creating the noisy measurements through an approximation $\hat{\x}_i=f_\theta(\z_i,i)$ of the sample $\x$ in each timestep $i$.

\paragraph{Extension to categorical data.} Now, we will focus on the case that our data lies on the simplex, i.e., we have a categorical belief for $\x$ over $c$ possible categories, i.e., $x \in \Delta_{c-1}^n \subset [0,1]^{n \times c}$. If we have access to noisy measurements $\y_i \sim \mathcal{N}(\x, \Sigma^2=\alpha_i^{-1} I)$ of the sample $\x$, we can infer $\x$ from the measurements using Bayes' theorem in a similar fashion to the continuous case. We start with an initial belief $p(\x \mid \z_0) \sim \cat(\softmax(\z_0))$, where $\z_0 \in \mathbb{R}^{n \times c}$ are the logits of a categorical distribution with $n$ independent components. Then, we can update the belief parameters $\z$ after observing $\y_i$ using Bayes' theorem.
\begin{theorem}
    \label{theorem:update_equation_v2}
    Given a prior belief $p(\x\mid \z)=\cat(\x\mid \softmax(\z))$, after observing $\y \sim \mathcal{N}(\y \mid  \mu = \x, \Sigma^2 = \alpha^{-1} \mathbf{I})$ at precision $\alpha$, the posterior belief is $p(\x\mid \z,\y,\alpha) =\cat(\x\mid \softmax(\z_\mathrm{post}))$ with
    \begin{equation}
        \z_\mathrm{post} = \z + \alpha \y.
    \end{equation}
\end{theorem}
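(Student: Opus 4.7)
The plan is to carry out a direct Bayes-theorem calculation, exploiting the fact that samples $\x$ are one-hot vectors so that the Gaussian likelihood reduces to something log-linear in the category index.

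First, I would write the posterior as $p(\x\mid\z,\y,\alpha) \propto p(\x\mid\z)\,p(\y\mid\x)$. Because the prior over $n$ nodes/edges factorises across components (each a categorical on $c$ classes) and the Gaussian likelihood is isotropic, the update decouples: it suffices to prove the result for a single categorical variable, then tensorise. So I would fix one component and let $\x \in \{e_1,\dots,e_c\}$ be a one-hot vector with $p(\x=e_i\mid\z) = \softmax(\z)_i \propto \exp(z_i)$, and let $\y\in\mathbb{R}^c$ with $p(\y\mid\x) \propto \exp\!\bigl(-\tfrac{\alpha}{2}\|\y-\x\|^2\bigr)$.

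Next, I would expand the exponent on the event $\x=e_i$:
\begin{equation*}
-\tfrac{\alpha}{2}\|\y-e_i\|^2 \;=\; -\tfrac{\alpha}{2}\|\y\|^2 + \alpha\,y_i - \tfrac{\alpha}{2}.
\end{equation*}
The terms $-\tfrac{\alpha}{2}\|\y\|^2$ and $-\tfrac{\alpha}{2}$ do not depend on $i$ and can be absorbed into the proportionality constant. Multiplying by the prior factor $\exp(z_i)$ gives
\begin{equation*}
p(\x=e_i\mid\z,\y,\alpha) \;\propto\; \exp\!\bigl(z_i + \alpha\, y_i\bigr),
\end{equation*}
so after normalising over $i$ the posterior is $\cat\bigl(\x\mid\softmax(\z+\alpha\y)\bigr)$, establishing $\z_\mathrm{post}=\z+\alpha\y$ in the single-component case.

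Finally, I would lift back to $n$ independent components: since both the prior $\prod_k \cat(\x_k\mid\softmax(\z_k))$ and the likelihood $\prod_k \mathcal{N}(\y_k\mid\x_k,\alpha^{-1}I)$ factorise over $k$, applying the single-component result to each coordinate row yields the stated update simultaneously. There is no real obstacle here; the only subtlety to be careful about is that the quadratic $\|\y-\x\|^2$ is only linear in $\x$ because $\x$ is one-hot, so $\|e_i\|^2=1$ is a constant in $i$ — this is precisely what makes the Gaussian likelihood conjugate to a categorical prior in logit space and turns $\y$ into an additive shift of the logits scaled by the precision $\alpha$.
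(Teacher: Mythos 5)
Your proposal is correct and follows essentially the same route as the paper's proof: reduce to a single categorical component by independence/isotropy, apply Bayes' theorem, expand the Gaussian exponent, drop the $i$-independent terms (including $\|e_i\|^2=1$), and normalise to read off $\softmax(\z+\alpha\y)$. No gaps; the tensorisation over components is the same implicit step the paper takes.
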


Now, we can iterate over multiple noisy measurements and update our belief until $p(\x\mid \y_1, \dots, \y_k)$ identifies $\x$ with high probability. %
Through \cref{theorem:update_equation_v2}, we encode the information contained in all these measurements in our updated belief parameters $\z_k$ as $p(\x\mid \y_1, \dots, \y_k)=p(\x \mid \z_k) \sim \cat(\softmax(\z_k))$ with $\z_k = \z_0 + \sum_i \alpha_i \y_i$.

We process each observation $\y_i$ sequentially, inducing a notion of time. We measure $\y_i$ at time $t_i=\Delta t \cdot i \in [0,1]$ with $\Delta t = 1/(k+1)$, and the subsequent Bayesian update takes us to $t_{i+1}$. To control the total amount of information added to the belief $p(\x\mid \z_t)$ up to time $t$, we define a monotonically increasing \textit{precision schedule} $\beta\colon [0, 1] \to \mathbb{R}^+$. The measurement $\y_i$ contains the information added in the time interval $[t_i, t_{i+1}]$, and therefore we choose $\alpha_i = \beta(t_{i+1}) - \beta(t_i)$. Note that the update of the logits in \cref{theorem:update_equation_v2} is fundamentally different than that of continuous BSI. Here, the belief components accumulate in each update, whereas in the continuous case, the update is interpolated with its previous state.

\textbf{Generative model construction.} We build a generative model for categorical data given the above procedure, similarly as done for BSI with continuous data \citep{lienen2025generativemodelingbayesiansample}. We begin with a logit $\z_0$ defining the initial belief of the sample $\x$ that we will generate in the end, with $\z_0 \sim \mathcal{N}(\muzero,\beta_{0})$ sampled from a simple prior distribution. As $\x$ is unknown a priori, we cannot measure it, so instead we estimate it from the information we have gathered so far encoded in our latest belief. Let $f_\theta: \mathbb{R}^{n \times c} \times [0,1] \mapsto \Delta^n_{c-1}$ be a neural network with parameters $\theta$ estimating the unknown sample $\x$ behind our observations given our current belief $\z_t$ and time $t$. We estimate $\x$ as $\hat{\x}_i=f_\theta(\z_i,t)$, followed by a noisy measurement $\y_i \sim \mathcal{N}(\hat{\x}_i, \Sigma^2=\alpha^{-1}_i)$ centered around $\hat{\x}_i$ with precision $\alpha_i$. Then, we update our belief with $\y_i$ via \cref{theorem:update_equation_v2}. Now, we repeatedly predict $\hat{\x}_i$, measure $\y_i$, and update the belief parameters $\z_{i+1} \gets \z_i + \alpha_i \y_i$ until our belief is sufficiently sharp at $t=1$. Finally, we return a sample from $\cat(\x\mid \softmax(\z_1))$. See \cref{alg:bsi_sampling_categorical} for a formal description.

\textbf{Evidence Lower Bound.} To train our neural network, we interpret CatBSI as a hierarchical latent variable model to derive an evidence lower bound (ELBO) of the sample likelihood \citep{kingma2022autoencodingvariationalbayes}, providing a natural training target. As latent variables, we choose the beliefs $\z_0,\dots,\z_k$. Their distribution in \cref{alg:bsi_sampling_categorical} factorizes, allowing us to write

\begin{equation}
    p(\x)= \EE\limits_{p(\z_0) \prod_{i=1}^k p(\z_i|\z_{i-1},\theta)}\left [p(\x \mid \z_k) \right ].
\end{equation}

As encoding distribution $q(\z_0,\z_1,\dots,\z_k \mid \x)$, we choose the distribution induced under \cref{alg:bsi_sampling_categorical} with a fixed reconstruction $f_\theta(\z,t)=\x$. Thanks to the simple form of \cref{theorem:update_equation_v2}, it is straightforward to compute the marginal $q(\z_i\mid \x)$:

\begin{equation} \label{eq:zi_marginal}
    \z_i = \z_0 + \sum_{j=0}^{i-1} \alpha_j \y_j \sim 
    \mathcal{N}(\muzero + \beta(t_i)\x,\Sigma^2=\beta_0+\beta(t_i))
\end{equation}

Equipped with this, we can derive the following ELBO:
\begin{theorem}
    \label{theorem:bsi_elbo}
For categorical BSI, the log-likelihood of $\x$ under \cref{alg:bsi_sampling_categorical} is lower bounded by
    \begin{equation}
        \log p(\x) \geq \EE\limits_{\substack{\z_k \sim q(\z|\x,t_k)}}[\log p(\x\mid\z_k)] -\frac{k}{2}  \EE\limits_{\substack{i \sim \mathcal{U}(0,k-1) \\ \z_i \sim q(\z|\x,t_i)}}[(\beta(t_{i+1})-\beta(t_i))||f_\theta(\z_i,t_i)-\x||_2 ^2],
    \end{equation}
    where $q(\z \mid \x, t) = \mathcal{N}(\z \mid \muzero + \beta(t)\x, \beta_0 + \beta(t) I)$ and $p(\x\mid\z_k)=\cat(\x\mid \softmax(\z_k))$.
\end{theorem}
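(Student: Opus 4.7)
The plan is to apply the standard hierarchical ELBO and then show that both the prior KL and all transition KLs simplify to very clean expressions, because the forward and reverse transitions share the same variance. Starting from the hierarchical factorization $p(\x) = \EE_{p(\z_0)\prod_{i=1}^k p(\z_i\mid \z_{i-1},\theta)}[p(\x\mid \z_k)]$ and using $q(\z_0,\dots,\z_k\mid \x)$ defined by \cref{alg:bsi_sampling_categorical} with the reconstruction replaced by the true $\x$, Jensen's inequality gives
\begin{equation*}
    \log p(\x) \;\geq\; \EE_q[\log p(\x\mid \z_k)] \;-\; \KL\!\left(q(\z_0\mid \x)\,\|\,p(\z_0)\right) \;-\; \sum_{i=1}^{k} \EE_q\!\left[\KL\!\left(q(\z_i\mid \z_{i-1},\x)\,\|\,p(\z_i\mid \z_{i-1},\theta)\right)\right].
\end{equation*}

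Next I would identify the two transition distributions. Under $q$, the update $\z_i=\z_{i-1}+\alpha_{i-1}\y_{i-1}$ with $\y_{i-1}\sim\mathcal{N}(\x,\alpha_{i-1}^{-1}I)$ yields $q(\z_i\mid \z_{i-1},\x)=\mathcal{N}(\z_{i-1}+\alpha_{i-1}\x,\,\alpha_{i-1}I)$, while the generative model uses the same update rule with $\x$ replaced by $f_\theta(\z_{i-1},t_{i-1})$, giving $p(\z_i\mid \z_{i-1},\theta)=\mathcal{N}(\z_{i-1}+\alpha_{i-1}f_\theta(\z_{i-1},t_{i-1}),\,\alpha_{i-1}I)$. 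Because the two Gaussians share the same isotropic covariance $\alpha_{i-1}I$, the KL collapses to the mean-difference term
\begin{equation*}
    \KL\!\left(q(\z_i\mid \z_{i-1},\x)\,\|\,p(\z_i\mid \z_{i-1},\theta)\right) \;=\; \frac{\alpha_{i-1}}{2}\,\|\x - f_\theta(\z_{i-1},t_{i-1})\|_2^2.
\end{equation*}
Setting $t_0=0$ with $\beta(0)=0$ in \eqref{eq:zi_marginal} gives $q(\z_0\mid \x)=\mathcal{N}(\muzero,\beta_0 I)=p(\z_0)$, so the prior KL vanishes.

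To finish, I would substitute $\alpha_{i-1}=\beta(t_i)-\beta(t_{i-1})$, reindex the sum $i=1,\dots,k$ as $i=0,\dots,k-1$, and note that the integrand of each term depends only on $\z_i$, so the joint expectation under $q$ reduces to an expectation against the marginal $q(\z\mid \x,t_i)$ given in \eqref{eq:zi_marginal}. Writing $\sum_{i=0}^{k-1}(\cdot)=k\,\EE_{i\sim\mathcal{U}(0,k-1)}[(\cdot)]$ then produces exactly the stated bound.

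\paragraph{Main obstacle.} The derivation is essentially bookkeeping once the two Gaussian transitions are identified, and the only real subtlety is observing that the forward measurement noise and the generative measurement noise are chosen with the same precision $\alpha_{i-1}$, so their KL is a pure mean-squared-error term rather than a harder expression mixing means and variances. Verifying the boundary condition $\beta(0)=0$ that kills the prior KL, and confirming that the integrand depends only on $\z_i$ so we can replace the joint expectation by the marginal from \eqref{eq:zi_marginal}, are the two details I would double-check most carefully.
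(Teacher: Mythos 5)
Your proposal is correct and follows essentially the same route as the paper's proof: the hierarchical ELBO over the latents $\z_0,\dots,\z_k$ with the encoding distribution given by \cref{alg:bsi_sampling_categorical} using the true $\x$, the collapse of each transition KL between equal-covariance Gaussians to $\tfrac{\alpha_i}{2}\|\x-f_\theta(\z_i,t_i)\|_2^2$, cancellation of the prior term, and the rewriting of the sum as $k$ times a uniform expectation against the marginal $q(\z\mid\x,t_i)$ from \cref{eq:zi_marginal}. The only cosmetic difference is that you invoke the chain-rule decomposition of the joint KL explicitly, whereas the paper cancels the shared $\mathcal{N}(\z_0\mid\muzero,\beta_0 I)$ factor inside the log-ratio; these are the same computation.
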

The first term does not depend on $\theta$ and therefore cannot be optimized; we only need to minimize the second term. For $k \to \infty$, we have that $k (\beta(t_{i+1})-\beta(t_i)) \to \beta'(t_i)$ since $\Delta t = t_{i+1}-t_i = 1/(k+1) \approx 1/k$, and $t_i \sim \mathcal{U}(0,1)$. Maximizing the ELBO for $k\to \infty$ over the dataset above is therefore equivalent to minimizing

\begin{equation}
    \label{eq:categorical_bsi_loss}
    \mathcal{L} \equiv \EE\limits_{\substack{\x \sim p(\x)\\t \sim \mathcal{U}(0,1) \\ \z \sim q(\z|\x, t)}}[\beta'(t)/2\cdot||f_\theta(\z,t)-\x||_2 ^2]
\end{equation}

The loss above immediately yields the training procedure \cref{alg:bsi_training_categorical}. This matches the continuous-time categorical BFN loss up to a constant when $\beta_0 \to 0$, i.e., the prior is a Dirac delta at $t=0$.
\begin{figure}
\input{includes/main_algorithms}
\end{figure}

\paragraph{Adaptation for graphs.} We represent graphs with $N$ nodes as tuples $(X, A)$, where $X \in \Delta_{c_X-1}^{N} \subset [0,1]^{N \times c_X}$ are the one-hot encoded categories of each node and $A \in \Delta_{c_A-1}^{N \times N} \subset [0,1]^{N \times N \times c_A}$ the one-hot encoded categories of each edge, with the first category denoting the absence of an edge. We treat each node and edge as an independent component of the categorical belief, allowing us to apply the categorical BSI framework to graphs. Note that dependence between edges is introduced via our network $f$. We choose a permutation invariant reconstruction network $f_\theta$, resulting in a permutation invariant generative model when the noise is isotropic.

To enable a varying number of nodes in the graph, we first sample a number of nodes $N$ from the marginal node count distribution, and subsequently generate the node and edge values. In practice, this is achieved by masking out inactive nodes and edges for train graphs with fewer nodes.

\paragraph{Adaptation for sequences} As a general discrete generative model, Categorical BSI is applicable for sequence generation, too. Here, a sequence $S$ of length $l$ with a vocabulary size $v$ is represented  in the one-hot-encoded format $S\in\Delta^l_v \subset [0,1]^{l\times v}$. We include an exemplary implementation trained on DNA sequences in \cref{sec:sequences}.

\section{Categorical BSI as a Stochastic Differential Equation}
\begin{figure}[ht]
    \centering
    \includegraphics[width=\textwidth]{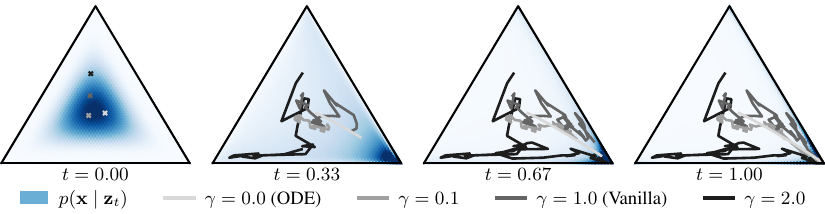}
    \caption{Trajectories of the SDE \cref{theorem:generalized_sde} for different values of $\gamma$ with three classes and fixed reconstruction $f_\theta(\z_t, t)=\hat{e}_2$. At $\gamma=0$, the sampler resembles a probability flow ODE as in flow matching. Increasing $\gamma$ leads to noisier trajectories. At $\gamma=1$, the original SDE in \cref{theorem:sde} is recovered, and increasing the noise further makes the trajectories even more volatile. The density function of the marginal distribution $p(\x\mid\z_t)$ (shown in the background) is identical for all $\gamma$.}
    \label{fig:samplers}
\end{figure}

In this section, we analyze the update equation in \cref{theorem:update_equation_v2} and take the infinite-step limit, obtaining an SDE. We then introduce a parameter that controls the stochasticity and yields a family of SDEs with identical marginals.

\paragraph{SDE Dynamics.} First, we notice that as the number of steps $k$ increases, i.e., $\Delta t := 1/(k+1) \to 0$, the updates in \cref{theorem:update_equation_v2} converge to the following SDE.
\begin{theorem}
    \label{thm:categorical_bsi_sde}
    As $\Delta t \to 0$, the update equation in \cref{theorem:update_equation_v2} converges to the following SDE:
    \begin{align}
        d\z_t & = \beta'(t) f_\theta(\z_t, t) dt + \sqrt{\beta'(t)} dW_t  \label{eq:categorical_bsi_sde}
    \end{align}
    where $dW_t$ is a Wiener process and $\z_{0}\sim \mathcal{N}(\muzero, \beta_0 \cdot I)$.
    \label{theorem:sde}
\end{theorem}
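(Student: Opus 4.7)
The plan is to recognize the update rule $\z_{i+1}=\z_i+\alpha_i\y_i$ with $\y_i\sim\mathcal{N}(f_\theta(\z_i,t_i),\alpha_i^{-1}I)$ as an Euler–Maruyama discretization of the claimed SDE. First I would rewrite $\alpha_i\y_i$ using the reparametrization $\y_i = f_\theta(\z_i,t_i) + \alpha_i^{-1/2}\xi_i$ with $\xi_i\sim\mathcal{N}(0,I)$, so that
\begin{equation}
\z_{i+1}-\z_i \;=\; \alpha_i f_\theta(\z_i,t_i) \;+\; \sqrt{\alpha_i}\,\xi_i.
\end{equation}
This is already the canonical form of a Euler–Maruyama step once we identify the increments.

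Next I would Taylor-expand the precision schedule: since $\beta\in C^1$ and $\Delta t=t_{i+1}-t_i=1/(k+1)$, we have
\begin{equation}
\alpha_i \;=\; \beta(t_{i+1})-\beta(t_i) \;=\; \beta'(t_i)\,\Delta t + o(\Delta t).
\end{equation}
Substituting this in gives drift $\beta'(t_i)f_\theta(\z_i,t_i)\,\Delta t + o(\Delta t)$ and diffusion $\sqrt{\beta'(t_i)\Delta t}\,\xi_i + o(\sqrt{\Delta t})$. Comparing term by term to the Euler–Maruyama scheme for $d\z_t = \mu(\z_t,t)\,dt + \sigma(\z_t,t)\,dW_t$, which produces $\z_{i+1}-\z_i = \mu(\z_i,t_i)\Delta t + \sigma(\z_i,t_i)\sqrt{\Delta t}\,\xi_i + o(\sqrt{\Delta t})$, identifies $\mu(\z_t,t)=\beta'(t)f_\theta(\z_t,t)$ and $\sigma(\z_t,t)=\sqrt{\beta'(t)}$, which is exactly the SDE claimed. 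The initial condition $\z_0\sim\mathcal{N}(\muzero,\beta_0 I)$ is inherited directly from the sampling algorithm.

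To turn this into convergence rather than a formal matching of coefficients, I would invoke the standard result that if $f_\theta$ is Lipschitz in $\z$ and continuous in $t$ (and $\beta'$ is bounded on $[0,1]$), then the Euler–Maruyama scheme converges strongly to the solution of the limiting SDE as $\Delta t\to 0$ (e.g.\ Kloeden–Platen, Thm.\ 10.2.2); the $o(\Delta t)$ and $o(\sqrt{\Delta t})$ remainders from the Taylor expansion of $\beta$ are absorbed into the usual local truncation error and do not affect the limit.

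The main obstacle is purely expository: justifying interchange of the $k\to\infty$ limit with the stochastic integral requires the regularity assumptions on $f_\theta$ and $\beta$ mentioned above. For a neural network $f_\theta$ these are mild and standard in the BFN/diffusion literature, so I would simply state them as a regularity assumption and cite the Euler–Maruyama convergence theorem rather than reproving it. Everything else is a direct substitution.
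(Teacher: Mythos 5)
Your proposal is correct and follows essentially the same route as the paper: reparametrize the noisy measurement, Taylor-expand $\alpha_i=\beta(t_{i+1})-\beta(t_i)\to\beta'(t_i)\Delta t$, and recognize the resulting update as the Euler--Maruyama discretization of the claimed SDE with the prior as initial condition. Your additional step of invoking the standard Euler--Maruyama convergence theorem under Lipschitz/boundedness assumptions only makes explicit the limiting argument the paper leaves informal.
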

Note that while the distribution of $\z_{0}$ is not required to be normal for \cref{thm:categorical_bsi_sde} itself, it is necessary for the following steps. Phrasing the evolution of the latent variable $\z_t$ as an SDE enables the use of more advanced sampling schemes and allows us to derive a generalized SDE family. The original discrete update in \cref{theorem:update_equation_v2} is recovered by applying an Euler-Maruyama discretization of \cref{eq:categorical_bsi_sde}.

\paragraph{Generalized SDE.} We now generalize~\cref{eq:categorical_bsi_sde} to a family that preserves the marginal probability paths $p_t(\z_t)$ while controlling stochasticity via the parameter $\gamma$, similar to \cite{karras2022elucidatingdesignspacediffusionbased}:

\begin{theorem}
    The SDE in \cref{theorem:sde} is generalized by the following family of SDEs with equal marginal densities $p_t(\z_t)$:
    \begin{align}
        d\z_t & = \beta'(t) f_\theta(\z_t, t) dt + \frac{\gamma -1}{2}\beta'(t)\nabla_{\z_t}\log p_t(\z_t)dt+\sqrt{\gamma\beta'(t)} dW_t \label{eq:generalized_sde}
    \end{align}
    where $dW_t$ is a Wiener process and $\z_{0}\sim \mathcal{N}(\muzero, \beta_0 \cdot I)$.
    \label{theorem:generalized_sde}
\end{theorem}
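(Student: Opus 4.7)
The plan is to invoke the Fokker--Planck equation: two Itô SDEs that start from the same initial distribution and induce the same time-dependent family of marginals $p_t(\z_t)$ must satisfy the same Fokker--Planck PDE, and conversely, adding any drift/diffusion modification that leaves this PDE invariant preserves the marginals. This is the same device used by \citet{karras2022elucidatingdesignspacediffusionbased} for diffusion models, and it applies verbatim here because both SDEs in question are driven by a standard Wiener process with state-independent diffusion coefficients.

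Concretely, I would proceed in three steps. \textbf{Step 1 (baseline Fokker--Planck).} Write down the Fokker--Planck equation associated with the SDE in \cref{theorem:sde},
\begin{equation*}
\partial_t p_t(\z) = -\nabla_{\z} \cdot \bigl[\beta'(t)\, f_\theta(\z,t)\, p_t(\z)\bigr] + \tfrac{1}{2}\beta'(t)\, \Delta_{\z} p_t(\z),
\end{equation*}
and note that $\z_0$ has the prescribed Gaussian law, fixing the initial condition. \textbf{Step 2 (generalized Fokker--Planck).} Write the Fokker--Planck equation for \cref{eq:generalized_sde}:
\begin{equation*}
\partial_t p_t(\z) = -\nabla_{\z} \cdot \Bigl[\beta'(t)\, f_\theta(\z,t)\, p_t(\z) + \tfrac{\gamma-1}{2}\beta'(t)\,(\nabla_{\z}\log p_t(\z))\, p_t(\z)\Bigr] + \tfrac{\gamma\beta'(t)}{2}\Delta_{\z} p_t(\z).
\end{equation*}
\textbf{Step 3 (identify the two PDEs).} Use the identity $(\nabla \log p_t)\, p_t = \nabla p_t$ to rewrite the extra drift term as $-\tfrac{\gamma-1}{2}\beta'(t)\,\Delta p_t$, which combines with the diffusion term $\tfrac{\gamma\beta'(t)}{2}\Delta p_t$ to yield exactly $\tfrac{\beta'(t)}{2}\Delta p_t$, matching Step 1. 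Together with the same initial law for $\z_0$, uniqueness of solutions of the Fokker--Planck equation (under standard regularity assumptions on $f_\theta$ and integrability of $p_t$) then gives equality of the marginals for all $t\in[0,1]$.

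The computation itself is routine; the genuine obstacle is justifying the analytic prerequisites. Specifically, I would need $p_t$ to be strictly positive and sufficiently smooth so that $\nabla \log p_t$ and the appearing derivatives are well defined, and I would need the usual decay conditions at infinity to invoke uniqueness of the Fokker--Planck solution. The first is reasonable because the marginal in \cref{eq:zi_marginal} is a convolution of the data distribution with a nondegenerate Gaussian (hence smooth and positive on $\R^{n\times c}$), and the second follows from Gaussian tail bounds and boundedness of $f_\theta$ on the simplex; I would state these as mild regularity assumptions and defer to standard references for the uniqueness result.
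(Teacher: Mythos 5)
Your proposal is correct and follows essentially the same route as the paper's own proof: writing the Fokker--Planck equation for the generalized SDE, using $p_t\nabla_{\z}\log p_t=\nabla_{\z}p_t$ to fold the score drift into the diffusion term, and recovering the Fokker--Planck equation of \cref{eq:categorical_bsi_sde}. The only difference is that you additionally spell out the regularity and uniqueness prerequisites, which the paper leaves implicit.
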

Setting $\gamma=0$ yields a deterministic probability flow ODE, equivalent to \cite{xue2024unifyingbayesianflownetworks}. Unlike BFNs, however, CatBSI samples the prior belief $p(\z\mid t=0)$ rather than choosing a fixed prior, naturally avoiding the discontinuity around $t=0$. Further, choosing $\gamma=1$ recovers the original SDE in \cref{theorem:sde}, and larger $\gamma$ produces more stochastic trajectories. We visualize in \cref{fig:samplers,fig:binary_sampler} how varying $\gamma$ affects the dynamics for toy examples. Although the marginal distributions are equal for all $\gamma$ in theory, the empirical performance varies as $\nabla_{\z_t}\log p_t(\z_t)$ is not available in closed form. Higher stochasticity allows the model to correct errors made in previous sampling steps but requires a finer discretization (see \cref{sec:ablations}). In the limit $\gamma\to\infty$, the sampler effectively overwrites the current state completely in every step (see \cref{app:infinite_noise_sampling}).
To turn \cref{eq:generalized_sde} into a practical sampling algorithm, we approximate the score function $\nabla_{\z_t}\log p_t(\z_t)$, as described in the following.

\begin{theorem}
    \label{theorem:score_function}
    The BSI loss \cref{eq:categorical_bsi_loss} also is a score matching loss with the score model $s_\theta(\z,t)$ parameterized as
    \begin{equation}
        s_\theta(\z, t) \equiv \frac{\muzero + \beta(t)f_{\theta}(\z,t)-\z}{\beta(t) + \beta_0}  \overset{!}{\approx} \nabla_\z \log p_t(\z)
    \end{equation}
\end{theorem}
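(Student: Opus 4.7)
The plan is to reduce the claim to the classical denoising score matching identity (Vincent, 2011) applied to the Gaussian conditional $q(\z \mid \x, t)$ from \cref{eq:zi_marginal}. First I would write down the score of this conditional in closed form. Since $q(\z \mid \x, t) = \mathcal{N}(\z \mid \muzero + \beta(t)\x,\ (\beta_0 + \beta(t)) I)$, a direct computation gives
\begin{equation*}
\nabla_\z \log q(\z \mid \x, t) \;=\; \frac{\muzero + \beta(t)\x - \z}{\beta_0 + \beta(t)}.
\end{equation*}

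Next I would substitute the proposed parameterization of $s_\theta$ and subtract to obtain
\begin{equation*}
s_\theta(\z,t) - \nabla_\z \log q(\z \mid \x, t) \;=\; \frac{\beta(t)}{\beta_0+\beta(t)}\bigl(f_\theta(\z,t) - \x\bigr),
\end{equation*}
so that the squared error between the score model and the conditional score equals $\tfrac{\beta(t)^2}{(\beta_0+\beta(t))^2}\,\|f_\theta(\z,t)-\x\|_2^2$. Plugging this into the BSI loss \cref{eq:categorical_bsi_loss} shows
\begin{equation*}
\mathcal{L} \;=\; \EE_{\x,t,\z}\!\left[\frac{\beta'(t)(\beta_0+\beta(t))^2}{2\,\beta(t)^2}\,\bigl\|s_\theta(\z,t)-\nabla_\z \log q(\z\mid\x,t)\bigr\|_2^2\right],
\end{equation*}
which is a (time-weighted) denoising score matching objective with respect to $q(\z\mid\x,t)$.

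Finally, I would invoke the standard denoising score matching identity: for any weighting $w(t)>0$, the expectation $\EE_{\x,\z}[w(t)\|s_\theta(\z,t)-\nabla_\z \log q(\z\mid \x,t)\|_2^2]$ differs from $\EE_{\z}[w(t)\|s_\theta(\z,t)-\nabla_\z \log p_t(\z)\|_2^2]$ by a term independent of $\theta$, because $p_t(\z) = \int q(\z\mid\x,t)\,p(\x)\,d\x$ and one can exchange differentiation and integration thanks to the Gaussian regularity of $q$. Consequently, minimizing $\mathcal{L}$ in $\theta$ is equivalent to minimizing a weighted $L^2$ error between $s_\theta$ and the true marginal score $\nabla_\z \log p_t(\z)$, justifying the approximation $s_\theta(\z,t)\approx \nabla_\z \log p_t(\z)$.

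The main obstacle is essentially bookkeeping: identifying the correct reparameterization of $f_\theta$ into $s_\theta$ so that the residuals coincide up to a time-dependent scalar, and then tracking the weighting carefully to confirm that the BSI loss is indeed a valid (strictly positive-weighted) denoising score matching loss at every $t$. Once this is in place, the appeal to Vincent's identity is immediate and the conclusion follows.
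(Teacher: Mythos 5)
Your proposal is correct and follows essentially the same route as the paper's proof: compute the Gaussian conditional score of $q(\z\mid\x,t)$, observe that the residual between $s_\theta$ and that score is $\tfrac{\beta(t)}{\beta_0+\beta(t)}(f_\theta(\z,t)-\x)$, and match the weighting $\lambda(t)=\beta'(t)\tfrac{(\beta(t)+\beta_0)^2}{\beta(t)^2}$ so the weighted denoising score matching objective coincides with the BSI loss. The only difference is that you make the final appeal to Vincent's denoising-score-matching identity explicit, whereas the paper leaves that step implicit by defining the score matching loss directly in its denoising (conditional-score) form.
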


\paragraph{Discretization and integration.} As the SDE is not solvable in closed form, we resort to numerical sampling. While a simple Euler-Maruyama (EM) approach performs well on sufficiently fine time grids, we find that integrating a locally linearized SDE within each step can improve sample quality for low numbers of neural function evaluations (see \cref{sec:ablations}). More specifically, we freeze the reconstructor $\hat{\x}=f_\theta(\z_t, t)$ over the time interval $[t, t+\Delta t]$, representing an Ornstein-Uhlenbeck process. This allows us to solve the SDE analytically within this interval.

\begin{theorem}
    \label{theorem:discretized_sde}
    Fixing the prediction $\hat{\x}=f_\theta(\z_t, t)$ and the values $\beta = \beta(t+\Delta t / 2)$, $\beta' = \beta'(t+\Delta t / 2)$ in \cref{eq:generalized_sde}  in a time interval $[t, t+\Delta t]$ yields an Ornstein-Uhlenbeck (OU) process with the exact marginal

    \begin{equation}
        \z_{t + \Delta t} \sim m + (\z_t - m) e^{-\kappa \Delta t} + \sqrt{\frac{\gamma \beta'}{2 \kappa}(1 - e^{-2 \kappa \Delta t})} \cdot \mathcal{N}(0, I),
    \end{equation}
    where $\kappa = \frac{(\gamma-1)\beta'}{2(\beta_0+\beta)}$, $m=\muzero+(\beta + \beta'/\kappa)\hat{\x}$. 
\end{theorem}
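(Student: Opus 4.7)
The plan is a direct calculation: substitute the score approximation from \cref{theorem:score_function} into the generalized SDE of \cref{theorem:generalized_sde}, freeze the designated quantities to turn the resulting stochastic differential equation into a linear one, and then read off the exact OU transition density.

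First I would write down the frozen SDE explicitly. Replacing $\nabla_{\z_s}\log p_s(\z_s)$ by $s_\theta(\z_s,s) = (\muzero + \beta\,\hat{\x} - \z_s)/(\beta_0+\beta)$ on $s\in[t,t+\Delta t]$ and using the constants $\hat{\x},\beta,\beta'$, the drift becomes
\begin{equation}
\beta'\hat{\x} \;+\; \frac{\gamma-1}{2}\beta'\,\frac{\muzero+\beta\hat{\x}-\z_s}{\beta_0+\beta}
\;=\; -\kappa\,\z_s \;+\; \bigl[\beta'\hat{\x}+\kappa(\muzero+\beta\hat{\x})\bigr],
\end{equation}
with $\kappa=\tfrac{(\gamma-1)\beta'}{2(\beta_0+\beta)}$. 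The second step is to regroup the constant part as $\kappa\,m$ with $m=\muzero+(\beta+\beta'/\kappa)\hat{\x}$, so the SDE takes the canonical OU form
\begin{equation}
d\z_s \;=\; \kappa(m-\z_s)\,ds \;+\; \sqrt{\gamma\beta'}\,dW_s.
\end{equation}

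The third step is the standard integrating-factor argument: multiplying by $e^{\kappa s}$ and applying It\^o, the process $e^{\kappa s}(\z_s-m)$ is a martingale with quadratic variation $\gamma\beta'\int e^{2\kappa s}\,ds$. Integrating from $t$ to $t+\Delta t$ and dividing by $e^{\kappa(t+\Delta t)}$ yields
\begin{equation}
\z_{t+\Delta t} \;=\; m + (\z_t-m)e^{-\kappa\Delta t} \;+\; \int_t^{t+\Delta t} e^{-\kappa(t+\Delta t - s)}\sqrt{\gamma\beta'}\,dW_s,
\end{equation}
and evaluating the deterministic integral of the variance gives $\tfrac{\gamma\beta'}{2\kappa}(1-e^{-2\kappa\Delta t})$, matching the stated Gaussian. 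Since the stochastic integral of a deterministic integrand against Brownian motion is Gaussian, this is the exact marginal.

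I do not anticipate a serious obstacle; the only subtlety is algebraic bookkeeping to match the paper's parameterization of $m$ (in particular, absorbing the pure-drift term $\beta'\hat{\x}$ into $\kappa m$ by writing $\beta'/\kappa$), and a small sanity check of the $\gamma\to 1$ (resp.\ $\kappa\to 0$) limit where $\tfrac{1-e^{-2\kappa\Delta t}}{2\kappa}\to\Delta t$ recovers the original SDE of \cref{theorem:sde}. One should also note that the result is formally valid for either sign of $\kappa$ (i.e.\ $\gamma<1$ or $\gamma>1$), since the OU solution formula does not require $\kappa>0$.
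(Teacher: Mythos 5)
Your proposal is correct and follows essentially the same route as the paper: substitute the score parameterization from \cref{theorem:score_function}, regroup the frozen drift into the canonical OU form $d\z_s = \kappa(m-\z_s)\,ds + \sqrt{\gamma\beta'}\,dW_s$ with exactly the stated $\kappa$ and $m$, and then read off the exact Gaussian transition. The only difference is that you derive the OU marginal explicitly via the integrating factor, whereas the paper simply cites the standard result; your added remarks on the $\kappa\to 0$ limit and the sign of $\kappa$ are harmless extras.
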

Note that the OU discretization converges towards the EM scheme for $\Delta t\to 0$ (see \cref{app:equivalence_of_sampling_algorithms}).
\paragraph{Quantizing instead of sampling.} If the belief precision at $t=1$ is sufficiently sharp, the final sampling step in \cref{alg:bsi_sampling_categorical} is de facto deterministic. However, this presents an opportunity to improve sampling efficiency: In the last few steps, simply sampling from the belief would yield too noisy samples, but the belief contains enough information so that the reconstructor can make a perfect reconstruction of it (see \cref{fig:missclassification_rate_by_t}). Therefore, we can instead stop at a lower final precision and return reconstruction projected on the sample space through a quantization. Employing the discretization schemes yields \cref{alg:generalized_sde_sampling:euler_maruyama,alg:generalized_sde_sampling:ornstein_uhlenbeck}.

We also allow a nonuniform time grid. Following \citet{karras2022elucidatingdesignspacediffusionbased}, we introduce a parameter $\rho$ that controls the distribution of function evaluations over the time grid:
\begin{equation}
  t_i = \left(\frac{i}{k}\right)^{\rho}, \qquad i=0,1,\dots,k.\label{eq:rho}
\end{equation}
Here, $\rho=1$ recovers a uniform grid; larger $\rho$ concentrates steps near the beginning ($t\approx 0$), whereas smaller $\rho$ concentrates them near the end ($t\approx 1$).
\begin{figure}
\setlength{\topsep}{0pt}
\setlength{\intextsep}{0pt}
\begin{minipage}{0.48\textwidth}
    \begin{algorithm}[H]
        \caption{Euler-Maruyama Sampling}
        \label{alg:generalized_sde_sampling:euler_maruyama}
        \begin{algorithmic}
            \Require reconstructor $f_\theta$, discretization $\Delta t$, precision schedule $\beta: [0, 1] \to \mathbb{R}^+$, $\gamma \geq 0$
            \State $\z \sim \mathcal{N}(\muzero, \beta_0 I)$
            \For{$t = 0,\Delta t,2\Delta t, \dots ,1-\Delta t$}
            \State $\hat{\x} \gets f_\theta(\z, t)$
            \State $\s_\theta \gets \frac{\muzero + \beta(t)\hat{\x}-\z}{\beta(t) + \beta_0}$
            \State $\bmu \gets \beta'(t)(\hat{\x} + \frac{\gamma-1}{2} \s_\theta)$
            \State $\sigma \gets \sqrt{\gamma \beta'(t)}$
            \State $\z \gets \z + \bmu \Delta t + \sigma \sqrt{\Delta t} \cdot \mathcal{N}(0, I)$
            \EndFor
            \State \Return $\mathrm{Quantize}(f_\theta(\z,t=1))$
        \end{algorithmic}
    \end{algorithm}
\end{minipage}
\hfill
\setlength{\topsep}{0pt}
\setlength{\intextsep}{0pt}
\begin{minipage}{0.48\textwidth}
    \begin{algorithm}[H]
        \caption{Ornstein-Uhlenbeck Sampling}
        \label{alg:generalized_sde_sampling:ornstein_uhlenbeck}
        \begin{algorithmic}
            \Require reconstructor $f_\theta$, discretization $\Delta t$, precision schedule $\beta: [0, 1] \to \mathbb{R}^+$, $\gamma > 1$
            \State $\z \sim \mathcal{N}(\muzero, \beta_0 I)$
            \For{$t = \Delta t/2,\Delta t+\Delta t/2, \dots ,1-\Delta t/2$}
            \State $\hat{\x} \gets f_\theta(\z, t)$
            \State $\kappa \gets \frac{(\gamma-1)\beta'(t)}{2(\beta_0+\beta(t))}$
            \State $m\gets \muzero+(\beta(t) + \beta'(t)/\kappa)\hat{\x}$
            \State $\sigma^2\gets\frac{\gamma \beta'(t)}{2 \kappa}(1 - e^{-2 \kappa \Delta t})$
            \State $\z \gets m + (\z - m) e^{-\kappa \Delta t} + \sqrt{\sigma^2} \cdot \mathcal{N}(0, I)$
            \EndFor
            \State \Return $\mathrm{Quantize}(f_\theta(\z,t=1))$
        \end{algorithmic}
    \end{algorithm}
\end{minipage}
\end{figure}

\section{Experiments}
\label{sec:experiments}

In this section, we present our empirical results. We benchmark our model against state-of-the-art baselines from the diffusion and flow-matching literature on unconditional molecular and synthetic graph generation. The GuacaMol and Moses benchmarks for molecular generation \citep{guacamol,moses} serve as our primary evaluation datasets. Additionally, we conduct ablation studies to analyze the impact of various components and hyperparameters on the model's performance. Further, we report results on the synthetic planar, tree, and stochastic block model graph generation tasks \citep{bergmeister2024efficientscalablegraphgeneration,martinkus2022spectrespectralconditioninghelps}.
\subsection{Experimental Setup}
\paragraph{Datasets.} To test performance on real-world graphs, we train GraphBSI on the Moses \citep{moses} and GuacaMol \citep{guacamol} datasets for molecular generation. We extract graphs out of the dataset smiles with RDKit \citet{rdkit} and construct the node features $X$ and adjacency matrix $A$ in the format described in \cref{sec:bsi_framework}, where atom- and bond types correspond to node- and edge categories, respectively. Further, we include results for the planar, tree, and stochastic block model \citep{martinkus2022spectrespectralconditioninghelps,bergmeister2024efficientscalablegraphgeneration} synthetic graph generation datasets. Find a summary in \cref{tab:datasets}.
\paragraph{Evaluation metrics.} We follow the standard evaluation practices as established by \citet{moses,guacamol,fcd_paper} for molecule generation and \cite{martinkus2022spectrespectralconditioninghelps,bergmeister2024efficientscalablegraphgeneration} for synthetic graph generation. Find a detailed description in \cref{tab:molecular_metrics,tab:synthetic_graph_metrics}.

\paragraph{Practical considerations.} The reconstruction network $f_\theta$ is parameterized using the same graph transformer architecture as \citet{qin2025defogdiscreteflowmatching,vignac2023digressdiscretedenoisingdiffusion}, with the node- and edge logits and class probabilities, entropy, random walk features, and sinusoidal embeddings~\citep{vaswani2017attention} of the timestep $t$ with frequencies proposed by \citet{lienen2024zero} as features. Empirically, we find that an exponential precision schedule with a final precision that allows for a near-perfect reconstruction maximizes performance (see \cref{tab:hyperparameters,fig:final_precision_ablations,fig:missclassification_rate_by_t}). For both latent node- and edge classes, we choose a normal prior with the marginal distribution over the dataset and a small variance of $1.0$. Finally, we apply a preconditioning scheme where the neural network predicts the difference between the belief and the true sample, setting $f_\theta(z,t) = \softmax(z + \tilde{f}_\theta(z,t))$.

\textbf{Evaluation} After training to convergence, we evaluate the benchmark metrics for both discretization schemes \cref{alg:generalized_sde_sampling:euler_maruyama,alg:generalized_sde_sampling:ornstein_uhlenbeck}. For both molecule generation benchmarks, we report results with a compute budget of 50 and 500 discretization steps. In each of the four configurations (2 discretization schemes, 2 numbers of steps), we optimize the noise level $\gamma$ and report the best result. Find the final configurations in \cref{tab:hyperparameters}. 
For the synthetic graph generation benchmarks, we report results with the best-performing noise level and the Ornstein-Uhlenbeck discretization with 1000 function evaluations.
\subsection{Results}
\paragraph{Molecule Generation.} As illustrated in \cref{tab:molecule_results}, GraphBSI is competitive with 50 steps with both discretization schemes for both molecule benchmarks, achieving state-of-the-art results on the majority of the metrics. Notably, GraphBSI outperforms DeFoG with both discretization schemes on all metrics except novelty on Moses. On most metrics, the OU discretization performs better than the EM scheme.
At the full 500 steps, GraphBSI with the OU discretization outperforms all existing models on all metrics on GuacaMol, saturating validity and consistently exceeding the state-of-the-art. The EM scheme performs slightly worse than OU on most metrics, but remarkably surpasses the state-of-the-art on the FCD metric, reducing it from $1.07$ to $0.72$ on Moses. Find an extended comparison in \cref{tab:molecule_results_extended}.
\begin{table}[h]
  \centering
  \vspace{-0.75em}
  \caption{Results on the GuacaMol and Moses benchmarks for molecular generation with 50 and 500 sampling steps and the Euler-Maruyama (EM) and Ornstein-Uhlenbeck (OU) discretization.}\label{tab:molecule_results}
  \includegraphics[width=\textwidth]{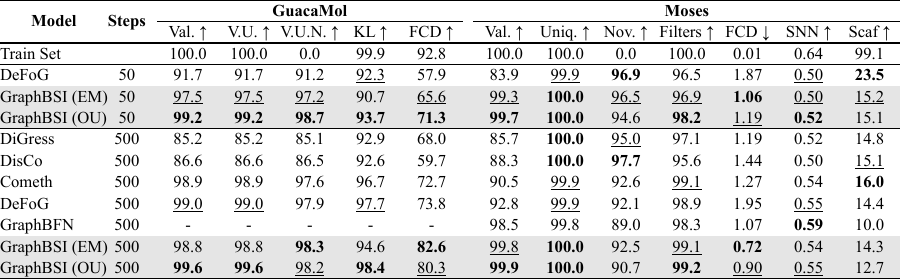}
\end{table}

\paragraph{Synthetic Benchmarks.} As shown in \cref{tab:synthetic_graph_results}, GraphBSI achieves competitive results on the synthetic graph generation benchmarks. Our model saturates validity on the planar- and tree graph generation tasks, and achieves adequate validity on the stochastic block model graphs. The mean ratio as a measure of distribution similarity is competitive on all three datasets, even though the metric should be taken with a grain of salt due to the small dataset size of only $128$ graphs, resulting in high uncertainty in the evaluation.
\begin{table}[h]
  \centering
  \caption{Results on the synthetic graph generation benchmarks. Like DeFoG, we generate 40 graphs five times and report the mean and standard deviation over the runs.}\label{tab:synthetic_graph_results}
  \includegraphics[width=\textwidth]{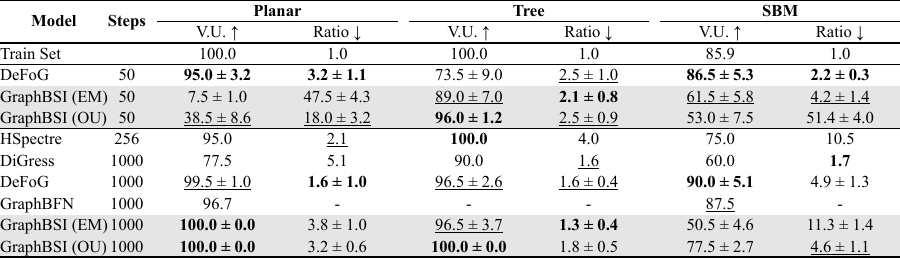}
\end{table}
\subsection{Ablation Studies}\label{sec:ablations}
\begin{figure}
    \centering
    \includegraphics[width=\textwidth]{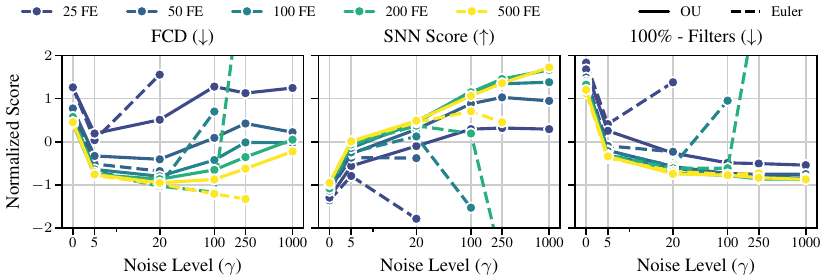}
    \caption{Normalized metrics (zero mean, unit variance) vs. noise level $\gamma$ for different numbers of function evaluations (FE) and discretization schemes. Our custom Ornstein-Uhlenbeck discretization scheme is denoted as OU, while the standard Euler-Maruyama scheme is written as Euler. Some values for the Euler scheme are missing since the sampler becomes unstable if $\gamma \cdot \Delta t$ becomes too large (see \cref{app:euler_maruyama_stability}).}
    \label{fig:noise_ablation}
\end{figure}

\paragraph{Noise level.} To test the effect of the compute budget, noise level, and discretization scheme on performance, we conduct a grid search over the number of function evaluations (NFEs) in $\{25,50,100,200,500\}$, noise levels $\gamma$ in $\{0.0,5.0,20.0,100.0,250.0,1000.0\}$, and both discretization schemes on the Moses dataset. As shown in \cref{fig:noise_ablation}, performance in both discretization schemes is closely related at low noise levels, which is to be expected since both discretize the same SDE. Higher compute budgets lead to better performance. However, the Euler-Maruyama scheme becomes unstable at higher noise levels, leading to a significant drop in performance (see \cref{app:euler_maruyama_stability}). In contrast, the Ornstein-Uhlenbeck scheme remains stable, and both the SNN score and Filters metric benefit from higher noise levels. The FCD metric is optimal at a medium noise level between $20$ and $100$. With a few exceptions, the Ornstein-Uhlenbeck scheme matches or outperforms the Euler-Maruyama scheme at all compute budgets and noise levels. Novelty suffers from increased noise levels and compute budgets, which is consistent with the model generating samples closer to the training data distribution. Notably, all metrics perform poorly at a noise level of $0.0$, which corresponds to the probability flow ODE (equivalent to \citet{xue2024unifyingbayesianflownetworks}). \cref{fig:noise_ablation_relative} illustrates that optimizing the noise level is a key driver in the performance gains of our model.

\begin{figure}[ht]
    \centering
\includegraphics{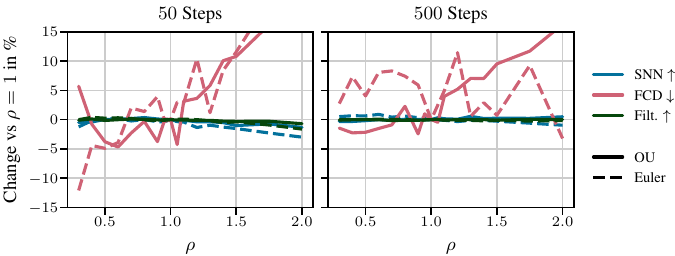}
\vspace{-0.5em}
    \caption{Performance change for changes in the non-uniform timestepping parameter $\rho$ in $t_i = (i/k)^\rho$ for $i=0,1,\dots,k$ compared to the uniform case $\rho=1$. $\rho<1$ results in a finer discretization at later timesteps, while $\rho>1$ corresponds to finer discretization at earlier steps.}
    \vspace{-0.5em}
 \label{fig:rho_ablation}
\end{figure}

\paragraph{Non-uniform timesteps.} To test whether a fine discretization is more important at some timesteps compared to others, we analyze the effect of non-uniform timestepping, putting a finer discretization at either earlier or later timesteps. As shown in \cref{fig:rho_ablation}, SNN and Moses Filters remain mostly unaffected by the choice of $\rho$; only the FCD displays a clear trend. A finer discretization at later timesteps ($\rho<1$) improves the FCD at $50$ function evaluations in both discretization schemes and at $500$ evaluations in the Ornstein-Uhlenbeck scheme.
\paragraph{Precision schedule.} We find that while an exponential precision schedule yields the best results, the difference compared to a simple linear schedule is negligible (see \cref{tab:molecule_results_extended}). One parameter that significantly affects performance is the final precision $\beta(t=1)$. As illustrated in \cref{fig:final_precision_ablations}, an excessively large final precision wastes sampler iterations in the final steps, and a too small final precision results in noisy samples. Ideally, the reconstructor is just able to predict the train samples flawlessly at $\beta(t=1)$. Finally, we isolate the effect of sampling the belief at $t=0$ instead of taking a fixed value, as with BFNs, by training a new model with a smaller initial variance of $\beta_0=0.05$, compared to the standard $\beta_0=1.0$. \cref{tab:molecule_results_extended} shows that for both values of $\beta_0$, the OU sampler outperforms the Flowback \citep{song2025smooth} sampler on most metrics. Surprisingly, the performance of the Flowback sampler drops significantly when $\beta_0$ is increased, while a higher value of $\beta_0$ improves performance for the OU sampler.

We conclude that two key factors are crucial for the performance gains of GraphBSI: First, the noise control, and second, a final precision that is just high enough for a perfect reconstruction. The exact precision schedule and non-uniform time-stepping show only a marginal contribution.

\section{Related Work}
\label{sec:related_work}

Graph generation presents three main challenges compared to image and text generation:  
(1) graphs are discrete structures, unlike images, which are continuous;  
(2) graphs have a variable shape, with both the number and arrangement of nodes and edges changing across samples, unlike the fixed dimensions of images; and  
(3) nodes in graphs lack a natural order, in contrast to text, where tokens follow a well-defined sequence. Various approaches have been proposed to tackle these challenges.

\textbf{Autoregressive models} have proven successful in text generation by sequentially predicting the next token based on previous ones \citep{brown2020few}. Applied to graphs, these models generate nodes and edges one by one, maintaining the graph structure as they proceed. This approach has been used for tasks such as molecule and social-network generation \citep{you2018graphrnngeneratingrealisticgraphs, liao2020efficientgraphgenerationgraph}. However, autoregressive models violate permutation invariance by relying on a specific node ordering.

\textbf{One-shot models} address the ordering challenge by generating the entire graph in a single step, without relying on a specific node ordering. Examples include Variational Autoencoders \citep{kingma2022autoencodingvariationalbayes}, GANs \citep{decao2022molganimplicitgenerativemodel}, normalizing flows \citep{liu2019graphnormalizingflows}, and discrete flow matching \citep{gat2024discreteflowmatching, qin2025defogdiscreteflowmatching}.

\textbf{Diffusion models} have emerged as a powerful class of one-shot generative models for continuous data such as images \citep{sohldickstein2015deepunsupervisedlearningusing, ho2020denoisingdiffusionprobabilisticmodels}. Their core idea is to learn a generative process that gradually transforms noise into clean data by reversing a diffusion process with a neural network. Noise is typically applied independently to each pixel in images or to each node in graphs, naturally resulting in a permutation-invariant model when combined with a Graph Neural Network (GNN) \citep{niu2020permutationinvariantgraphgeneration}. A variable number of nodes can be handled by conditioning the diffusion process on the node count, e.g., by first sampling a node mask and then applying diffusion to the masked graph \citep{niu2020permutationinvariantgraphgeneration, qin2025defogdiscreteflowmatching}. To improve scalability, hybrid methods that reverse a coarsening process and generate local structures with a diffusion model have also been proposed \citep{bergmeister2024efficientscalablegraphgeneration}.

\textbf{Discrete diffusion} addresses the discreteness of graphs. The most straightforward approach relaxes discrete data to a continuous space, applies diffusion, and quantizes the generated outputs back to the discrete space in a final step \citep{niu2020permutationinvariantgraphgeneration, jo2022scorebasedgenerativemodelinggraphs, jo2024graphgenerationdiffusionmixture}. Alternatively, one can use discrete diffusion in which the state is perturbed via a Markovian transition matrix in discrete time steps (often including an absorbing state) \citep{austin2023structureddenoisingdiffusionmodels}; this has been applied to graphs \citep{vignac2023digressdiscretedenoisingdiffusion, haefeli2023diffusionmodelsgraphsbenefit}. A related recent approach uses a continuous-time Markov chain for the discrete diffusion process (see \citep{campbell2022continuoustimeframeworkdiscrete}), which allows more flexible sampling on graphs \citep{siraudin2024comethcontinuoustimediscretestategraph, xu2024discretestatecontinuoustimediffusiongraph}.

\textbf{Bayesian Flow Networks} \citet{graves2025bayesianflownetworks} propose a conceptually distinct approach to discrete generative models: diffusion is applied to the \textit{parameters of a distribution over samples} rather than to the samples themselves. BFNs can be interpreted as an SDE, enabling more efficient sampling algorithms \citep{xue2024unifyingbayesianflownetworks}. This provides a solid theoretical foundation for diffusion on discrete data while retaining the benefits of smooth parameter changes, and it achieves competitive performance on protein and graph generation \citep{Atkinson2025, song2025smooth, 2025chembfn}. The flexible design of BFNs also permits joint generation of continuous and discrete quantities, for example the 3D positions, atom types, and charges in molecular generation \citep{song2024unified}.  

\textbf{Bayesian Sample Inference} \citet{lienen2025generativemodelingbayesiansample} extends BFNs by adding a prior over the distribution parameters and offers a simplified interpretation for the continuous-data case. \citet{kollovieh2025treegen} used the BSI framework to derive their generative model for hierarchies. However, they do not generalize the framework, i.e., do not derive SDE-based sampling algorithms, and do not optimize an ELBO as they specifically focus on hierarchy generation.

\section{Conclusion}
\label{sec:conclusion}
In this work, we introduce \textbf{GraphBSI}, a novel generative model for graphs based on Bayesian Sample Inference with state-of-the-art performance in large molecule generation benchmarks. Similar to Bayesian Flow Networks, GraphBSI iteratively refines a belief over the graph structure, modeled as a categorical distribution over adjacency matrices, through Bayesian updates. We show that in the limit of infinitesimal time steps, GraphBSI converges to a Stochastic Differential Equation (SDE). Further, we employ the Fokker-Planck equation to derive a generalized SDE with a tunable noise parameter, allowing us to interpolate between a deterministic probability flow ODE, the original SDE, and a substantially more volatile sampler. We demonstrate that GraphBSI achieves state-of-the-art performance on the GuacaMol and Moses benchmarks for large molecule generation, outperforming existing models on nearly all metrics. Finally, in our ablations we empirically show that noise control critically influences performance.

\paragraph{Limitations and Future Work.}
GraphBSI, in its current implementation, suffers from the quadratic scaling of compute and memory requirements in the number of nodes that comes with the application of a graph transformer. Exploring a more memory-efficient graph neural network architecture to generate larger graphs would be a promising avenue for future research. Further, while GraphBSI allows for variable-sized graphs, the number of nodes is sampled beforehand instead of jointly generated with the graph features. Allowing for nodes to appear or disappear while generating the graph, similar to jump diffusion \citep{campbell2023transdimensionalgenerativemodelingjump}, might result in a more flexible generative process.

\section*{Acknowledgments}
This paper is supported by the DAAD programme Konrad Zuse Schools of Excellence in Artificial Intelligence, sponsored by the Federal Ministry of Research, Technology and Space.”

\bibliography{iclr2026_conference}

\begin{thebibliography}{47}
\providecommand{\natexlab}[1]{#1}
\providecommand{\url}[1]{\texttt{#1}}
\expandafter\ifx\csname urlstyle\endcsname\relax
  \providecommand{\doi}[1]{doi: #1}\else
  \providecommand{\doi}{doi: \begingroup \urlstyle{rm}\Url}\fi

\bibitem[Atkinson et~al.(2025)Atkinson, Barrett, Cameron, Guloglu, Greenig,
  Tan, Robinson, Graves, Copoiu, and Laterre]{Atkinson2025}
Timothy Atkinson, Thomas~D. Barrett, Scott Cameron, Bora Guloglu, Matthew
  Greenig, Charlie~B. Tan, Louis Robinson, Alex Graves, Liviu Copoiu, and
  Alexandre Laterre.
\newblock Protein sequence modelling with bayesian flow networks.
\newblock \emph{Nature Communications}, 16\penalty0 (1):\penalty0 3197, 2025.
\newblock ISSN 2041-1723.
\newblock \doi{10.1038/s41467-025-58250-2}.
\newblock URL \url{https://doi.org/10.1038/s41467-025-58250-2}.

\bibitem[Austin et~al.(2023)Austin, Johnson, Ho, Tarlow, and van~den
  Berg]{austin2023structureddenoisingdiffusionmodels}
Jacob Austin, Daniel~D. Johnson, Jonathan Ho, Daniel Tarlow, and Rianne van~den
  Berg.
\newblock Structured denoising diffusion models in discrete state-spaces, 2023.
\newblock URL \url{https://arxiv.org/abs/2107.03006}.

\bibitem[Bergmeister et~al.(2024)Bergmeister, Martinkus, Perraudin, and
  Wattenhofer]{bergmeister2024efficientscalablegraphgeneration}
Andreas Bergmeister, Karolis Martinkus, Nathanaël Perraudin, and Roger
  Wattenhofer.
\newblock Efficient and scalable graph generation through iterative local
  expansion, 2024.
\newblock URL \url{https://arxiv.org/abs/2312.11529}.

\bibitem[Brown et~al.(2019)Brown, Fiscato, Segler, and Vaucher]{guacamol}
Nathan Brown, Marco Fiscato, Marwin~H.S. Segler, and Alain~C. Vaucher.
\newblock Guacamol: Benchmarking models for de novo molecular design.
\newblock \emph{Journal of Chemical Information and Modeling}, 59\penalty0
  (3):\penalty0 1096–1108, March 2019.
\newblock ISSN 1549-960X.
\newblock \doi{10.1021/acs.jcim.8b00839}.
\newblock URL \url{http://dx.doi.org/10.1021/acs.jcim.8b00839}.

\bibitem[Brown et~al.(2020)Brown, Mann, Ryder, Subbiah, Sutskever,
  et~al.]{brown2020few}
Tom~B. Brown, Benjamin Mann, Nick Ryder, Melanie Subbiah, …~Ilya Sutskever,
  et~al.
\newblock Language models are few‐shot learners.
\newblock In \emph{NeurIPS}, 2020.

\bibitem[Campbell et~al.(2022)Campbell, Benton, Bortoli, Rainforth,
  Deligiannidis, and Doucet]{campbell2022continuoustimeframeworkdiscrete}
Andrew Campbell, Joe Benton, Valentin~De Bortoli, Tom Rainforth, George
  Deligiannidis, and Arnaud Doucet.
\newblock A continuous time framework for discrete denoising models, 2022.
\newblock URL \url{https://arxiv.org/abs/2205.14987}.

\bibitem[Campbell et~al.(2023)Campbell, Harvey, Weilbach, Bortoli, Rainforth,
  and Doucet]{campbell2023transdimensionalgenerativemodelingjump}
Andrew Campbell, William Harvey, Christian Weilbach, Valentin~De Bortoli, Tom
  Rainforth, and Arnaud Doucet.
\newblock Trans-dimensional generative modeling via jump diffusion models,
  2023.
\newblock URL \url{https://arxiv.org/abs/2305.16261}.

\bibitem[Cao \& Kipf(2022)Cao and Kipf]{decao2022molganimplicitgenerativemodel}
Nicola~De Cao and Thomas Kipf.
\newblock Molgan: An implicit generative model for small molecular graphs,
  2022.
\newblock URL \url{https://arxiv.org/abs/1805.11973}.

\bibitem[Davis et~al.(2024)Davis, Kessler, Petrache, İsmail~İlkan Ceylan,
  Bronstein, and Bose]{davis2024fisherflowmatchinggenerative}
Oscar Davis, Samuel Kessler, Mircea Petrache, İsmail~İlkan Ceylan, Michael
  Bronstein, and Avishek~Joey Bose.
\newblock Fisher flow matching for generative modeling over discrete data,
  2024.
\newblock URL \url{https://arxiv.org/abs/2405.14664}.

\bibitem[Gat et~al.(2024)Gat, Remez, Shaul, Kreuk, Chen, Synnaeve, Adi, and
  Lipman]{gat2024discreteflowmatching}
Itai Gat, Tal Remez, Neta Shaul, Felix Kreuk, Ricky T.~Q. Chen, Gabriel
  Synnaeve, Yossi Adi, and Yaron Lipman.
\newblock Discrete flow matching, 2024.
\newblock URL \url{https://arxiv.org/abs/2407.15595}.

\bibitem[Graves et~al.(2025)Graves, Srivastava, Atkinson, and
  Gomez]{graves2025bayesianflownetworks}
Alex Graves, Rupesh~Kumar Srivastava, Timothy Atkinson, and Faustino Gomez.
\newblock Bayesian flow networks, 2025.
\newblock URL \url{https://arxiv.org/abs/2308.07037}.

\bibitem[Guo \& Zhao(2022)Guo and Zhao]{guo2022systematicsurveydeepgenerative}
Xiaojie Guo and Liang Zhao.
\newblock A systematic survey on deep generative models for graph generation,
  2022.
\newblock URL \url{https://arxiv.org/abs/2007.06686}.

\bibitem[Haefeli et~al.(2023)Haefeli, Martinkus, Perraudin, and
  Wattenhofer]{haefeli2023diffusionmodelsgraphsbenefit}
Kilian~Konstantin Haefeli, Karolis Martinkus, Nathanaël Perraudin, and Roger
  Wattenhofer.
\newblock Diffusion models for graphs benefit from discrete state spaces, 2023.
\newblock URL \url{https://arxiv.org/abs/2210.01549}.

\bibitem[Ho et~al.(2020)Ho, Jain, and
  Abbeel]{ho2020denoisingdiffusionprobabilisticmodels}
Jonathan Ho, Ajay Jain, and Pieter Abbeel.
\newblock Denoising diffusion probabilistic models, 2020.
\newblock URL \url{https://arxiv.org/abs/2006.11239}.

\bibitem[Janssens et~al.(2022)Janssens, Aibar, Taskiran, Ismail, Gomez, Aughey,
  Spanier, De~Rop, González-Blas, Dionne, Grimes, Quan, Papasokrati,
  Hulselmans, Makhzami, De~Waegeneer, Christiaens, Southall, and
  Aerts]{flybrain}
J.~Janssens, S.~Aibar, I.~I. Taskiran, J.~N. Ismail, A.~E. Gomez, G.~Aughey,
  K.~I. Spanier, F.~V. De~Rop, C.~B. González-Blas, M.~Dionne, K.~Grimes,
  X.~J. Quan, D.~Papasokrati, G.~Hulselmans, S.~Makhzami, M.~De~Waegeneer,
  V.~Christiaens, T.~Southall, and S.~Aerts.
\newblock Decoding gene regulation in the fly brain.
\newblock \emph{Nature}, 601\penalty0 (7894):\penalty0 630--636, January 2022.
\newblock \doi{10.1038/s41586-021-04262-z}.

\bibitem[Jo et~al.(2022)Jo, Lee, and
  Hwang]{jo2022scorebasedgenerativemodelinggraphs}
Jaehyeong Jo, Seul Lee, and Sung~Ju Hwang.
\newblock Score-based generative modeling of graphs via the system of
  stochastic differential equations, 2022.
\newblock URL \url{https://arxiv.org/abs/2202.02514}.

\bibitem[Jo et~al.(2024)Jo, Kim, and
  Hwang]{jo2024graphgenerationdiffusionmixture}
Jaehyeong Jo, Dongki Kim, and Sung~Ju Hwang.
\newblock Graph generation with diffusion mixture, 2024.
\newblock URL \url{https://arxiv.org/abs/2302.03596}.

\bibitem[Karras et~al.(2022)Karras, Aittala, Aila, and
  Laine]{karras2022elucidatingdesignspacediffusionbased}
Tero Karras, Miika Aittala, Timo Aila, and Samuli Laine.
\newblock Elucidating the design space of diffusion-based generative models,
  2022.
\newblock URL \url{https://arxiv.org/abs/2206.00364}.

\bibitem[Kingma \& Welling(2013)Kingma and
  Welling]{kingma2022autoencodingvariationalbayes}
Diederik~P Kingma and Max Welling.
\newblock Auto-encoding variational bayes, 2013.
\newblock URL \url{https://arxiv.org/abs/1312.6114}.

\bibitem[Kipf \& Welling(2016)Kipf and
  Welling]{kipf2016variationalgraphautoencoders}
Thomas~N. Kipf and Max Welling.
\newblock Variational graph auto-encoders, 2016.
\newblock URL \url{https://arxiv.org/abs/1611.07308}.

\bibitem[Kollovieh et~al.(2025)Kollovieh, Fleischmann, Guerranti, Charpentier,
  and G{\"u}nnemann]{kollovieh2025treegen}
Marcel Kollovieh, Nils Fleischmann, Filippo Guerranti, Bertrand Charpentier,
  and Stephan G{\"u}nnemann.
\newblock Treegen: A bayesian generative model for hierarchies.
\newblock In \emph{The Thirty-ninth Annual Conference on Neural Information
  Processing Systems}, 2025.
\newblock URL \url{https://openreview.net/forum?id=d2EouMhAAq}.

\bibitem[Liao et~al.(2020)Liao, Li, Song, Wang, Nash, Hamilton, Duvenaud,
  Urtasun, and Zemel]{liao2020efficientgraphgenerationgraph}
Renjie Liao, Yujia Li, Yang Song, Shenlong Wang, Charlie Nash, William~L.
  Hamilton, David Duvenaud, Raquel Urtasun, and Richard~S. Zemel.
\newblock Efficient graph generation with graph recurrent attention networks,
  2020.
\newblock URL \url{https://arxiv.org/abs/1910.00760}.

\bibitem[Lienen et~al.(2024)Lienen, L{\"u}dke, Hansen-Palmus, and
  G{\"u}nnemann]{lienen2024zero}
Marten Lienen, David L{\"u}dke, Jan Hansen-Palmus, and Stephan G{\"u}nnemann.
\newblock From zero to turbulence: Generative modeling for 3d flow simulation.
\newblock In \emph{ICLR}, 2024.

\bibitem[Lienen et~al.(2025)Lienen, Kollovieh, and
  Günnemann]{lienen2025generativemodelingbayesiansample}
Marten Lienen, Marcel Kollovieh, and Stephan Günnemann.
\newblock Generative modeling with bayesian sample inference, 2025.
\newblock URL \url{https://arxiv.org/abs/2502.07580}.

\bibitem[Liu et~al.(2019)Liu, Kumar, Ba, Kiros, and
  Swersky]{liu2019graphnormalizingflows}
Jenny Liu, Aviral Kumar, Jimmy Ba, Jamie Kiros, and Kevin Swersky.
\newblock Graph normalizing flows, 2019.
\newblock URL \url{https://arxiv.org/abs/1905.13177}.

\bibitem[Martinkus et~al.(2022)Martinkus, Loukas, Perraudin, and
  Wattenhofer]{martinkus2022spectrespectralconditioninghelps}
Karolis Martinkus, Andreas Loukas, Nathanaël Perraudin, and Roger Wattenhofer.
\newblock Spectre: Spectral conditioning helps to overcome the expressivity
  limits of one-shot graph generators, 2022.
\newblock URL \url{https://arxiv.org/abs/2204.01613}.

\bibitem[Murphy(2012)]{murphy2012machine}
Kevin~P Murphy.
\newblock \emph{Machine learning: a probabilistic perspective}.
\newblock 2012.

\bibitem[Niu et~al.(2020)Niu, Song, Song, Zhao, Grover, and
  Ermon]{niu2020permutationinvariantgraphgeneration}
Chenhao Niu, Yang Song, Jiaming Song, Shengjia Zhao, Aditya Grover, and Stefano
  Ermon.
\newblock Permutation invariant graph generation via score-based generative
  modeling, 2020.
\newblock URL \url{https://arxiv.org/abs/2003.00638}.

\bibitem[Polykovskiy et~al.(2020)Polykovskiy, Zhebrak, Sanchez-Lengeling,
  Golovanov, Tatanov, Belyaev, Kurbanov, Artamonov, Aladinskiy, Veselov,
  Kadurin, Johansson, Chen, Nikolenko, Aspuru-Guzik, and Zhavoronkov]{moses}
Daniil Polykovskiy, Alexander Zhebrak, Benjamin Sanchez-Lengeling, Sergey
  Golovanov, Oktai Tatanov, Stanislav Belyaev, Rauf Kurbanov, Aleksey
  Artamonov, Vladimir Aladinskiy, Mark Veselov, Artur Kadurin, Simon Johansson,
  Hongming Chen, Sergey Nikolenko, Alan Aspuru-Guzik, and Alex Zhavoronkov.
\newblock {M}olecular {S}ets ({MOSES}): {A} {B}enchmarking {P}latform for
  {M}olecular {G}eneration {M}odels.
\newblock \emph{Frontiers in Pharmacology}, 2020.

\bibitem[Preuer et~al.(2018)Preuer, Renz, Unterthiner, Hochreiter, and
  Klambauer]{fcd_paper}
Kristina Preuer, Philipp Renz, Thomas Unterthiner, Sepp Hochreiter, and
  G{\"u}nter Klambauer.
\newblock Fréchet chemnet distance: A metric for generative models for
  molecules in drug discovery.
\newblock \emph{Journal of Chemical Information and Modeling}, 58\penalty0
  (9):\penalty0 1736--1741, 2018.
\newblock \doi{10.1021/acs.jcim.8b00234}.
\newblock URL \url{https://doi.org/10.1021/acs.jcim.8b00234}.
\newblock PMID: 30118593.

\bibitem[Qin et~al.(2025)Qin, Madeira, Thanou, and
  Frossard]{qin2025defogdiscreteflowmatching}
Yiming Qin, Manuel Madeira, Dorina Thanou, and Pascal Frossard.
\newblock Defog: Discrete flow matching for graph generation, 2025.
\newblock URL \url{https://arxiv.org/abs/2410.04263}.

\bibitem[RDKit(2025)]{rdkit}
RDKit.
\newblock {RDKit}: Open-source cheminformatics.
\newblock \url{https://www.rdkit.org}, 2025.
\newblock Accessed: 2025-11-23.

\bibitem[Siraudin et~al.(2024)Siraudin, Malliaros, and
  Morris]{siraudin2024comethcontinuoustimediscretestategraph}
Antoine Siraudin, Fragkiskos~D. Malliaros, and Christopher Morris.
\newblock Cometh: A continuous-time discrete-state graph diffusion model, 2024.
\newblock URL \url{https://arxiv.org/abs/2406.06449}.

\bibitem[Sohl-Dickstein et~al.(2015)Sohl-Dickstein, Weiss, Maheswaranathan, and
  Ganguli]{sohldickstein2015deepunsupervisedlearningusing}
Jascha Sohl-Dickstein, Eric~A. Weiss, Niru Maheswaranathan, and Surya Ganguli.
\newblock Deep unsupervised learning using nonequilibrium thermodynamics, 2015.
\newblock URL \url{https://arxiv.org/abs/1503.03585}.

\bibitem[Song et~al.(2020)Song, Meng, and Ermon]{song2020denoising}
Jiaming Song, Chenlin Meng, and Stefano Ermon.
\newblock Denoising diffusion implicit models.
\newblock \emph{arXiv preprint arXiv:2010.02502}, 2020.

\bibitem[Song et~al.(2021)Song, Sohl-Dickstein, Kingma, Kumar, Ermon, and
  Poole]{song2021scorebasedgenerativemodelingstochastic}
Yang Song, Jascha Sohl-Dickstein, Diederik~P. Kingma, Abhishek Kumar, Stefano
  Ermon, and Ben Poole.
\newblock Score-based generative modeling through stochastic differential
  equations, 2021.
\newblock URL \url{https://arxiv.org/abs/2011.13456}.

\bibitem[Song et~al.(2024)Song, Gong, Zhou, Zheng, Liu, and
  Ma]{song2024unified}
Yuxuan Song, Jingjing Gong, Hao Zhou, Mingyue Zheng, Jingjing Liu, and Wei-Ying
  Ma.
\newblock Unified generative modeling of 3d molecules with bayesian flow
  networks.
\newblock In \emph{The Twelfth International Conference on Learning
  Representations}, 2024.
\newblock URL \url{https://openreview.net/forum?id=NSVtmmzeRB}.

\bibitem[Song et~al.(2025)Song, Shi, Gong, Xu, Ermon, Zhou, and
  Ma]{song2025smooth}
Yuxuan Song, Juntong Shi, Jingjing Gong, Minkai Xu, Stefano Ermon, Hao Zhou,
  and Wei-Ying Ma.
\newblock Smooth interpolation for improved discrete graph generative models.
\newblock In \emph{Forty-second International Conference on Machine Learning},
  2025.
\newblock URL \url{https://openreview.net/forum?id=OYUG5SCg6k}.

\bibitem[Stark et~al.(2024)Stark, Jing, Wang, Corso, Berger, Barzilay, and
  Jaakkola]{stark2024dirichletflowmatchingapplications}
Hannes Stark, Bowen Jing, Chenyu Wang, Gabriele Corso, Bonnie Berger, Regina
  Barzilay, and Tommi Jaakkola.
\newblock Dirichlet flow matching with applications to dna sequence design,
  2024.
\newblock URL \url{https://arxiv.org/abs/2402.05841}.

\bibitem[Tao \& Abe(2025)Tao and Abe]{2025chembfn}
Nianze Tao and Minori Abe.
\newblock Bayesian flow network framework for chemistry tasks.
\newblock \emph{Journal of Chemical Information and Modeling}, 65\penalty0
  (3):\penalty0 1178--1187, 2025.
\newblock \doi{10.1021/acs.jcim.4c01792}.

\bibitem[Uhlenbeck \& Ornstein(1930)Uhlenbeck and
  Ornstein]{uhlenbeck1930theory}
George~Eugene Uhlenbeck and Leonard~Salomon Ornstein.
\newblock On the theory of the brownian motion.
\newblock \emph{Physical Review}, 36\penalty0 (5):\penalty0 823--841, 1930.

\bibitem[Vaswani et~al.(2017)Vaswani, Shazeer, Parmar, Uszkoreit, Jones, Gomez,
  Kaiser, and Polosukhin]{vaswani2017attention}
Ashish Vaswani, Noam Shazeer, Niki Parmar, Jakob Uszkoreit, Llion Jones,
  Aidan~N Gomez, {\L}ukasz Kaiser, and Illia Polosukhin.
\newblock Attention is all you need.
\newblock \emph{Advances in neural information processing systems}, 30, 2017.

\bibitem[Vignac et~al.(2023)Vignac, Krawczuk, Siraudin, Wang, Cevher, and
  Frossard]{vignac2023digressdiscretedenoisingdiffusion}
Clement Vignac, Igor Krawczuk, Antoine Siraudin, Bohan Wang, Volkan Cevher, and
  Pascal Frossard.
\newblock Digress: Discrete denoising diffusion for graph generation, 2023.
\newblock URL \url{https://arxiv.org/abs/2209.14734}.

\bibitem[Xu et~al.(2024)Xu, Qiu, Chen, Chen, Fan, Pan, Zeng, Das, and
  Tong]{xu2024discretestatecontinuoustimediffusiongraph}
Zhe Xu, Ruizhong Qiu, Yuzhong Chen, Huiyuan Chen, Xiran Fan, Menghai Pan,
  Zhichen Zeng, Mahashweta Das, and Hanghang Tong.
\newblock Discrete-state continuous-time diffusion for graph generation, 2024.
\newblock URL \url{https://arxiv.org/abs/2405.11416}.

\bibitem[Xue et~al.(2024)Xue, Zhou, Nie, Min, Zhang, Zhou, and
  Li]{xue2024unifyingbayesianflownetworks}
Kaiwen Xue, Yuhao Zhou, Shen Nie, Xu~Min, Xiaolu Zhang, Jun Zhou, and Chongxuan
  Li.
\newblock Unifying bayesian flow networks and diffusion models through
  stochastic differential equations, 2024.
\newblock URL \url{https://arxiv.org/abs/2404.15766}.

\bibitem[You et~al.(2018)You, Ying, Ren, Hamilton, and
  Leskovec]{you2018graphrnngeneratingrealisticgraphs}
Jiaxuan You, Rex Ying, Xiang Ren, William~L. Hamilton, and Jure Leskovec.
\newblock Graphrnn: Generating realistic graphs with deep auto-regressive
  models, 2018.
\newblock URL \url{https://arxiv.org/abs/1802.08773}.

\bibitem[Zhu et~al.(2022)Zhu, Du, Wang, Xu, Zhang, Liu, and
  Wu]{zhu2022surveydeepgraphgeneration}
Yanqiao Zhu, Yuanqi Du, Yinkai Wang, Yichen Xu, Jieyu Zhang, Qiang Liu, and Shu
  Wu.
\newblock A survey on deep graph generation: Methods and applications, 2022.
\newblock URL \url{https://arxiv.org/abs/2203.06714}.

\end{thebibliography}
\bibliographystyle{iclr2026_conference}

\clearpage
\appendix

\section{Relationship to BFNs and Diffusion Models}
\label{app:bsi_bfn_equivalence}
\subsection{Relationship to BFNs}
There is a close equivalence between Categorical Bayesian Sample Inference (BSI) and Categorical Bayesian Flow Networks (BFNs). In fact, Categorical BFNs can be seen as a special case of Categorical BSI with a specific choice of prior distribution and noise schedule. The dynamics of BFNs are recovered when choosing the sampler in \cref{eq:generalized_sde} with $\gamma=1$ and $\beta_0=0$ to parametrize $\z_0$, i.e., making the prior logits deterministic. Note that we require $\beta_0>0$ to avoid numerical issues when approximating the score function. 
This generalized SDE allows BSI to vary stochasticity. Intuitively, increasing stochasticity allows the model to overwrite errors from previous predictions (see \cref{app:infinite_noise_sampling} for a discussion on the extreme case), and empirically, increasing stochasticity proves crucial for performance \cref{fig:noise_ablation}. To illustrate this, we will show the relationship between the components of both frameworks.

\textbf{Input Distribution} Both BFNs and categorical BSI parameterize the distribution over the data $\x$ using a categorical distribution. The logits are denoted as $\z$ in BSI and as $\theta$ in BFNs. In BSI, the parameters $\z$ are the logits of a categorical distribution, i.e., $p(\x \mid \z) \sim \cat(\softmax(\z))$. In BFNs, the parameters $\theta$ are the probabilities of each category, i.e., $p(\x \mid \theta) \sim \cat(\theta)$. The two parameterizations are equivalent since $\theta = \softmax(\z)$ and $\z = \log(\theta)$ (up to an additive constant).

\textbf{Output Distribution} The output distribution in BFNs is an intermediate distribution that is not needed in BSI.

\textbf{Prior Distribution} While Categorical BSI includes a normal prior distribution over the logits of the categorical distribution $(p(\z \mid t=0)\sim \mathcal{N}(\muzero, \beta_0 I))$, Categorical BFNs fix the parameters to $\theta_0=1/K$. Therefore, categorical BFNs can be seen as a special case of categorical BSI with $\muzero=0$ and $\beta_0 =0$.

\textbf{Sender Distribution} The sender distribution in categorical BFNs is an intermediate distribution that is not required in categorical BSI.

\textbf{Receiver Distribution} The sender distribution in categorical BFNs is given as

\[p_R(\y \mid \x,\alpha)\sim \sum_k \softmax(\Psi(\theta))_k \mathcal{N}(\alpha (K \hat{e}_k -1), \alpha K I)\]

It corresponds to the noisy measurement distribution in categorical BSI, $p(\y \mid \x,\alpha)\sim \mathcal{N}(\hat{\x}, 1/\alpha I)$. Note that for $\alpha \to 0$, it holds that:

\[p_R(\y \mid \x,\alpha)\sim \mathcal{N}(\alpha (K \softmax(\Psi(\theta)) -1), \alpha K I)\]

The sender distribution for $\alpha \to 0$ is an affine transformation of the noisy observation function for BSI: If we set $\y \sim p(\y \mid \x,\alpha)=\mathcal{N}(\hat{\x}, 1/\alpha I)$ and compute $y'=\alpha(K \y -1)$, then $y' \sim p_s(y' \mid \x,\alpha)$, where $\softmax(\Psi(\theta))$ corresponds to the sample reconstruction $\hat{\x}$. Thus, in the small-$\alpha$-limit, the two distributions have same-order approximation and therefore contain the same information. However, in the formulation of categorical BSI, we can directly see that $\y$ is a noisy observation of $\x$ and we do not require computing the distribution as a limit of a multinomial distribution as in BFNs.

\textbf{Bayesian Update Function} The Bayesian update function in categorical BFNs \cite[Eq. 171]{graves2025bayesianflownetworks} is the equivalent of \cref{theorem:update_equation_v2} in categorical BSI. The update is simplified for BSI since the belief parameters are in logit space instead of probability space. Furthermore, the scaling of the receiver distribution leads to an extra factor of $\alpha$ in categorical BSI.

\textbf{Bayesian Update Distribution} This is an intermediate that is not required in categorical BSI.

\textbf{Accuracy Schedule} The accuracy schedule can be chosen freely in categorical BSI. In categorical BFNs, the accuracy schedule is chosen as $\beta(t)=t^2\beta(1)$.

\textbf{Bayesian Flow Distribution} The Bayesian flow distribution in categorical BFNs corresponds to \cref{eq:zi_marginal} in categorical BSI. The two distributions are equivalent up to an affine transformation of the variable, as explained above.

\textbf{Continuous Time Loss} The continuous time loss in categorical BFNs \cite[Eq. 205]{graves2025bayesianflownetworks} corresponds to \cref{eq:categorical_bsi_loss} in categorical BSI. Both are the L2 loss between the reconstruction and the one-hot encoded data.

\textbf{SDE formulation} Both BSI and BFN sampling can be formulated as SDEs. Here, \cref{thm:categorical_bsi_sde} corresponds to \cite[Eq. 24]{xue2024unifyingbayesianflownetworks}. To do so, the authors also operate on the logits of the categorical distribution instead of the probabilities.

\textbf{Score function approximation} The score function approximation for categorical BFNs \cite[Eq. 28]{xue2024unifyingbayesianflownetworks} corresponds to \cref{theorem:score_function} for $\beta_0=0$ up to a constant. Note that a value of $\beta_0>0$ avoids the division by zero in the score function approximation at $t=0$.

\subsection{Relationship to diffusion models.}
    The logits $\z$ evolve in a way that closely resembles a diffusion process in logit space. From \cref{theorem:update_equation_v2} we have our denoising dynamics
\begin{equation}
    p(\z_{t+1}\mid \z_t, \x)
    = \mathcal{N}(\z_t + \alpha_t \x,\, \alpha_t \mathbf{I}).
\end{equation} 
Moreover, the marginal of $\z_t$ is given by
\begin{equation}
    p(\z_t \mid \x)
    = \mathcal{N}\bigl(\mu_0 + \beta(t)\x,\; (\beta_0 + \beta(t))\mathbf{I}\bigr)
    \label{eq:zi_marginal_repeat}
\end{equation}
(see \cref{eq:zi_marginal}). We define the corresponding ``noising'' process as the reverse-time conditional $p(\z_t \mid \z_{t+1}, \x)$. Using the standard Gaussian conditioning formula \citep[Eq.~4.125]{murphy2012machine}, we obtain
\begin{align}
    p(\z_t \mid \z_{t+1}, \x)
    = \mathcal{N}\!\left(
        \frac{(\beta_0 + \beta(t)) \z_{t+1} + \alpha_t \mu_0 - \alpha_t \beta_0 \x}
             {\beta_0 + \beta(t) + \alpha_t},
        \;
        \frac{\alpha_t (\beta_0 + \beta(t))}
             {\beta_0 + \beta(t) + \alpha_t}\,\mathbf{I}
    \right).
\end{align}
Thus, the reverse transition is Gaussian, analogous to the posterior 
$q(\x_{t-1} \mid \x_t, \x_0)$ in standard diffusion models. While this is not a 
typical diffusion process in the sense that the derived forward dynamics over 
$\z_t$ are generally non-Markovian, related non-Markovian formulations have 
been proposed before \citep{song2020denoising}. Interestingly, a Markovian process is 
recovered when setting $\beta_0 = 0$, which coincides with the original BFN 
parameterization \citep{graves2025bayesianflownetworks}.

\subsection{Relationship with Flow Matching Models}

At noise level $\gamma=0$, Categorical BSI is closely related to Flow Matching. The sampling SDE \cref{eq:generalized_sde} becomes an ODE where the right-hand side can be interpreted as an approximation of the flow field to follow. However, we do not train to directly predict the flow field, but to reconstruct the clean sample. Similar to Dirichlet Flow Matching (DFM), \cite{stark2024dirichletflowmatchingapplications}, Categorical BSI operates on a distribution over the simplex. However, while Categorical BSI uses the logits of a categorical distribution as a latent variable, DFM employs a mixture of Dirichlet distributions.

\section{BSI for sequence generation}
\label{sec:sequences}

Categorical BSI can generate general categorical data - it is not restricted to graphs. In this section, we demonstrate this capability empirically by training a categorical BSI model to generate sequences. We represent sequences with length $l$ and a vocabulary $v$ in the one-hot encoded format as $S\in\Delta^l_v \subset [0,1]^{l\times v}$. We call the resulting model SeqBSI.

Employing the same reconstruction model as \citealp{stark2024dirichletflowmatchingapplications,davis2024fisherflowmatchinggenerative}, a Convolutional Neural Network. We train on the toy dataset from \cite{davis2024fisherflowmatchinggenerative} with $l=4$ and $v\in\{5,10,20,40,60,80,100,120,140,160\}$ as well as a dataset of enhancer DNA sequences from fly brain cells \citet{flybrain} with $l=500$ and $v=4$ nucleotide bases.

Following \citet{stark2024dirichletflowmatchingapplications}, we report the KL divergence for the toy task and the Fréchet Biological Distance (FBD) as a measure of distribution similarity. As demonstrated in \cref{fig:sequence_results}, SeqBSI slightly outperforms Dirichlet Flow Matching \citep{stark2024dirichletflowmatchingapplications} in the flybrain task. The comparison with Fisher Flow Matching on this metric is difficult, as their evaluation shows vastly different results for Dirichlet flow matching than the results reported in their own paper. On the toy dataset task, SeqBSI outperforms Dirichlet Flow Matching and is competitive with Fisher Flow Matching (see \cref{fig:toy_experiment_results}).

\begin{table}
    \centering
    \caption{Results on the enhancer DNA sequence dataset}
    \includegraphics[width=\textwidth]{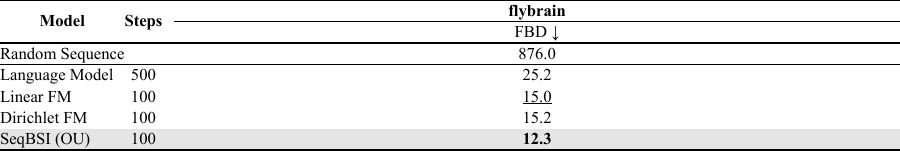}
    \label{fig:sequence_results}
\end{table}

\clearpage
\section{Analysis of SDE-based Sampling Algorithms}

In this section, we analyze the behavior of the SDE-based sampling methods \cref{alg:generalized_sde_sampling:euler_maruyama,alg:generalized_sde_sampling:ornstein_uhlenbeck}.

\subsection{Equivalence of the two Sampling Algorithms for Infinite Steps}
\label{app:equivalence_of_sampling_algorithms}
It is worth noting that for $\Delta t \to 0$, the Ornstein-Uhlenbeck discretization and the Euler-Maruyama discretization of \cref{eq:generalized_sde} converge to the same update step:

\begin{align}
    \z_{t + \Delta t} & \sim m + (\z_t - m) e^{-\kappa \Delta t} + \sqrt{\frac{\gamma \beta'}{2 \kappa}(1 - e^{-2 \kappa \Delta t})} \cdot \mathcal{N}(0, 1)                                         \\
                      & \to  m + (\z_t - m) (1-\kappa \Delta t) + \sqrt{\frac{\gamma \beta'}{2 \kappa}(1 - (1-2 \kappa \Delta t))} \cdot \mathcal{N}(0, 1)                                           \\
                      & = \z_t + \kappa(m - \z_t) \Delta t + \sqrt{\gamma \beta' \Delta t} \cdot \mathcal{N}(0, 1)                                                                                    \\
                      & = \z_t +  \kappa (\muzero+(\beta + \beta'/\kappa)\hat{\x} - \z_t) \Delta t + \sqrt{\gamma \beta' \Delta t} \cdot \mathcal{N}(0, 1)                                               \\
                      & = \z_t + \beta' \hat{\x} \Delta t + \frac{\gamma -1}{2}\beta' \frac{\muzero + \beta \hat{\x}-\z_t}{\beta + \beta_0} \Delta t+\sqrt{\gamma\beta' \Delta t} \cdot \mathcal{N}(0, 1) \\
                      & = \z_t + \beta' f_\theta(\z_t, t) \Delta t + \frac{\gamma -1}{2}\beta'\nabla_{\z_t}\log p_t(\z_t) \Delta t+\sqrt{\gamma\beta' \Delta t} \cdot \mathcal{N}(0, 1)
\end{align}

\subsection{Stability of Euler-Maruyama Sampling}\label{app:euler_maruyama_stability}
Let us explicitly write out the update step of the Euler-Maruyama discretization of \cref{eq:generalized_sde}:

\begin{align}
    \z_{t + \Delta t} & \sim \z_t + \beta' \hat{\x} \Delta t + \frac{\gamma -1}{2}\beta' \frac{\muzero + \beta \hat{\x}-\z_t}{\beta + \beta_0} \Delta t+\sqrt{\gamma\beta' \Delta t} \cdot \mathcal{N}(0, 1)                                                  \\
                      & = \left(1 - \frac{(\gamma -1)\beta'}{2(\beta + \beta_0)}\Delta t\right) \z_t + \beta' \hat{\x} \Delta t + \frac{(\gamma -1)\beta'(\muzero +\beta \hat{\x})}{2(\beta + \beta_0)} \Delta t + \sqrt{\gamma\beta' \Delta t} \cdot \mathcal{N}(0, 1)
\end{align}

As a rule of thumb, the coefficient in front of $\z_t$ should not be negative, i.e., the previous step should not be over-corrected. This yields the condition
\begin{align}
    1 - \frac{(\gamma -1)\beta'}{2(\beta + \beta_0)}\Delta t & \geq 0                                 \\
    \iff \Delta t \cdot (\gamma -1)                          & \leq \frac{2(\beta + \beta_0)}{\beta'}
\end{align}
For our precision schedule on moses ($\beta_\mathtt{start}=3.0, \beta_\mathtt{end}=12.0, \beta_0=1.0$), we find that

\begin{equation}\label{eq:stability_condition}
    \min_{t\in[0,1]} \frac{2(\beta(t) + \beta_0)}{\beta'(t)} \approx 0.48
\end{equation}
The resulting maximum stable noise level $\gamma$ for different numbers of sampling steps in \cref{tab:stability_condition} predicts the observed behavior in \cref{fig:noise_ablation} surprisingly well.

\begin{table}[h]
    \centering
    \caption{Maximum stable $\gamma$ for different numbers of sampling steps with the Euler-Maruyama discretization, following \cref{eq:stability_condition}.}
    \label{tab:stability_condition}
    \begin{tabular}{rrr}
        \toprule
        Number of Timesteps & $\Delta t$ & Maximum Stable $\gamma$ \\
        \midrule
        25                  & 0.040000   & 12.938480               \\
        50                  & 0.020000   & 24.876960               \\
        100                 & 0.010000   & 48.753920               \\
        200                 & 0.005000   & 96.507840               \\
        500                 & 0.002000   & 239.769601              \\
        \bottomrule
    \end{tabular}
\end{table}

\subsection{Behavior of Ornstein-Uhlenbeck Sampling with Infinite Noise}
\label{app:infinite_noise_sampling}
Taking the limit $\gamma \to \infty$ in \cref{alg:generalized_sde_sampling:ornstein_uhlenbeck} yields an interesting sampling algorithm (see \cref{alg:bsi_sampling_categorical:gamma_to_inf}). In this limit, the update step becomes independent of the previous step $\z_t$, replacing all previous information with the current prediction $\hat{\x}$. Empirically, we find that fixing the prior value after the initial sampling step, as shown in \cref{alg:bsi_sampling_categorical:gamma_to_inf:fixed_prior}, works better in practice (see \cref{tab:molecule_results_extended}). This algorithm matches the Flowback algorithm from \citet{song2025smooth}. We find that with a budget of 50 sampling steps, this algorithm performs surprisingly well on molecule generation. However, a higher compute budget drastically reduces performance. We hypothesize that this is because an excessive amount of stochasticity is introduced.
\citet{song2025smooth} address this by adaptively alternating between vanilla BFN steps and Flowback steps, effectively mixing \cref{alg:bsi_sampling_categorical} with \cref{alg:bsi_sampling_categorical:gamma_to_inf}.

\begin{minipage}{0.48\textwidth}
    \begin{algorithm}[H]
        \caption{Sampling with $\gamma \to \infty$}
        \label{alg:bsi_sampling_categorical:gamma_to_inf}
        \begin{algorithmic}
            \Require reconstructor $f_\theta$, discretization $\Delta t$, precision schedule $\beta: [0, 1] \to \mathbb{R}^+$
            \State $\z_0 \sim \mathcal{N}(\muzero, \beta_0 I)$
            \State $\z \gets \z_0$
            \For{$t = 0 \dots 1$ in steps of $\Delta t$}
            \State $\hat{\x} \gets f_\theta(\z, t)$
            \State $\alpha \gets \beta_0 + \beta(t + \Delta t/2)$
            \State $\y \sim \mathcal{N}(\mu=\hat{\x}, \Sigma^2=1/\alpha \cdot I)$
            \State $\triangleright$ Go from prior to $t$ in single step
            \State $\z \gets \muzero + \alpha \cdot \y$
            \EndFor
            \State \Return $\mathrm{Quantize}(f_\theta(\z, 1))$
        \end{algorithmic}
    \end{algorithm}
\end{minipage}
\hfill
\begin{minipage}{0.48\textwidth}
    \begin{algorithm}[H]
        \caption{Fixed-prior sampling with $\gamma \to \infty$}
        \label{alg:bsi_sampling_categorical:gamma_to_inf:fixed_prior}
        \begin{algorithmic}
            \Require reconstructor $f_\theta$, discretization $\Delta t$, precision schedule $\beta: [0, 1] \to \mathbb{R}^+$
            \State $\z_0 \sim \mathcal{N}(\muzero, \beta_0 I)$
            \State $\z \gets \z_0$
            \For{$t = 0 \dots 1$ in steps of $\Delta t$}
            \State $\hat{\x} \gets f_\theta(\z, t)$
            \State $\alpha \gets \beta(t + \Delta t/2)$
            \State $\y \sim \mathcal{N}(\mu=\hat{\x}, \Sigma^2=1/\alpha \cdot I)$
            \State $\triangleright$ Go from prior to $t$ in single step
            \State $\z \gets \z_0 + \alpha \cdot \y$
            \EndFor
            \State \Return $\mathrm{Quantize}(f_\theta(\z, 1))$
        \end{algorithmic}
    \end{algorithm}
\end{minipage}

\begin{table}[h]
  \centering
  \caption{Results on the GuacaMol and Moses benchmarks for molecular generation with the Euler- (EM) and Ornstein-Uhlenbeck (OU) discretization, and with \cref{alg:bsi_sampling_categorical:gamma_to_inf} ($\gamma \to \infty$) and \cref{alg:bsi_sampling_categorical:gamma_to_inf:fixed_prior} ($\gamma \to \infty$, FP), as well as results for a linear scheduler (lin) with the same final precision as the exponential scheduler. Additionally, we include results obtained with the FlowBack (FB) sampler \citet{song2025smooth} using a smaller value of $\beta_0$, as well as the OU sampler with the same checkpoint. The EM sampler becomes unstable at $\beta_0=0.05$.}\label{tab:molecule_results_extended}
  \includegraphics[width=\textwidth]{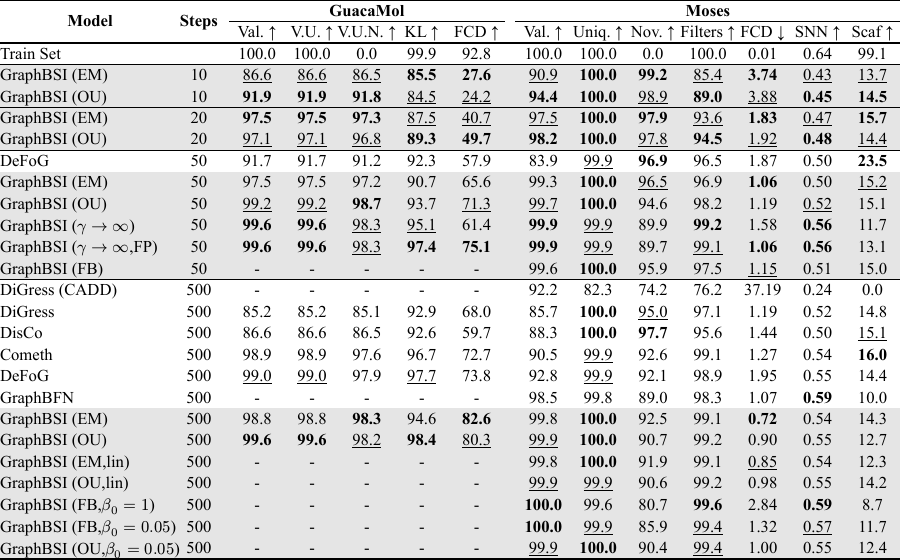}
\end{table}

\begin{figure}
    \centering
    \includegraphics[width=0.5\linewidth]{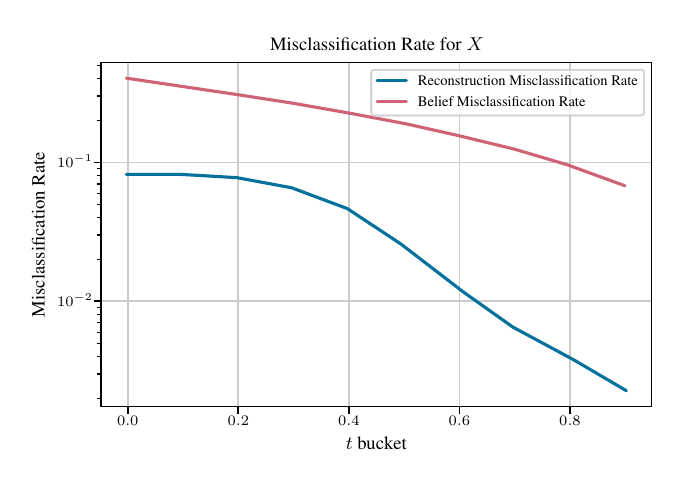}
    \caption{Empirical misclassification rate of a trained reconstructor on the moses dataset under the encoding distribution. Compared to simply sampling from the belief, returning a reconstruction is far more likely to yield the correct train sample. Therefore, returning a quantization of the reconstruction instead of sampling from the belief is significantly more efficient for molecule generation. However, deriving the ELBO under quantization is intractable to optimize. Therefore, we have the sampling-formulation to derive a tractable ELBO and the quantized-formulation to optimize efficiency after training.}
    \label{fig:missclassification_rate_by_t}
\end{figure}

\begin{figure}[ht]
    \centering

    \begin{subfigure}{0.8\linewidth}
        \centering
        \includegraphics[width=\linewidth]{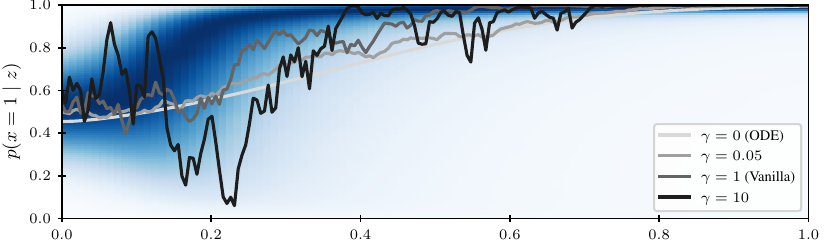}
        \caption{Exemplary trajectories}
        \label{fig:binary_sampler:exemplary_trajectories}
    \end{subfigure}

    \vspace{1em} %

    \begin{subfigure}{0.8\linewidth}
        \centering
        \includegraphics[width=\linewidth]{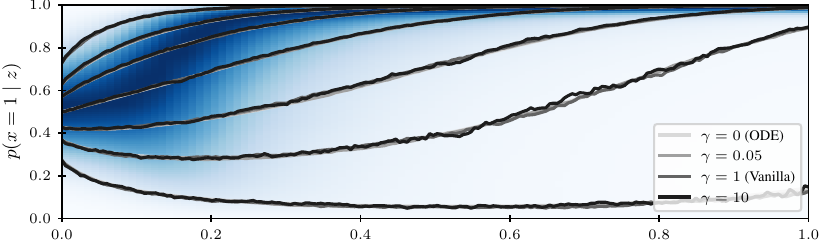}
        \caption{Quantiles ($1\%,10\%,50\%,75\%,90\%,99\%$) over $10000$ trajectories}
        \label{fig:binary_sampler:quantiles}
    \end{subfigure}

    \caption{Illustration of the trajectories of the categorical sampler with two categories with a fixed reconstruction $f(\z,t)=\hat{e}_1$ for different noise levels $\gamma$. While higher values of $\gamma$ result in more volatile trajectories (see \cref{fig:binary_sampler:exemplary_trajectories}), the marginal distribution is preserved if the score function is known exactly (see \cref{fig:binary_sampler:quantiles}). Since we approximate the score function in practice, the noise level is a crucial hyperparameter to fin-tune during inference.}
    \label{fig:binary_sampler}
\end{figure}

\begin{figure}
    \centering
    \includegraphics[width=0.9\linewidth]{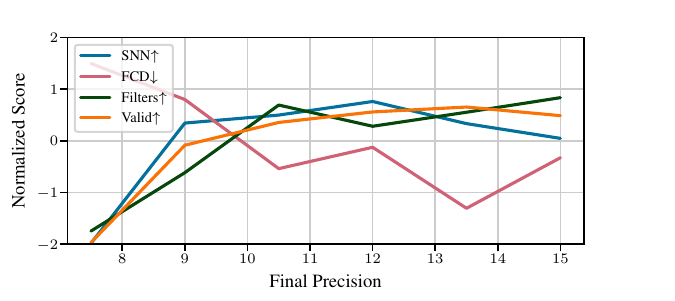}
    \caption{Key metrics on the Moses benchmark with a linear scheduler, ending at different final precisions. The model was trained with a final precision of 15, and to generate this plot, sampling was stopped early instead of training a new model for each precision value. While too small final precision values yield noisy samples, too large final precision values waste sampling steps.}
    \label{fig:final_precision_ablations}
\end{figure}
\begin{figure}
    \centering
    \includegraphics[width=\textwidth]{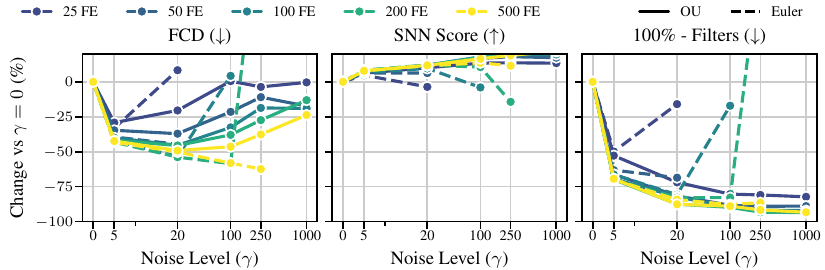}
    \caption{Change in metrics relative to $\gamma=0$ vs. noise level $\gamma$ for different numbers of function evaluations (FE) and discretization schemes. Our custom Ornstein-Uhlenbeck discretization scheme is denoted as OU, while the standard Euler-Maruyama scheme is written as Euler. Some values for the Euler scheme are missing since the sampler becomes unstable if $\gamma \cdot \Delta t$ becomes too large (see \cref{app:euler_maruyama_stability}).}
    \label{fig:noise_ablation_relative}
\end{figure}
\begin{figure}
    \centering
    \includegraphics[width=0.8\linewidth]{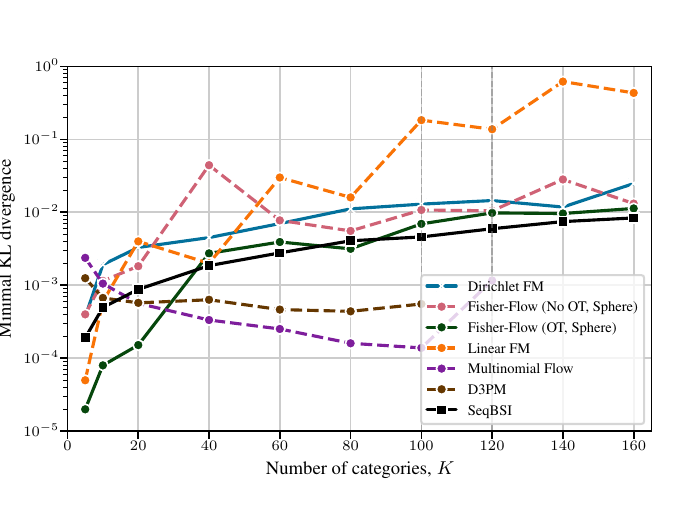}
    \caption{KL divergence on the toy sequences benchmark by \citep{davis2024fisherflowmatchinggenerative}, reporting the lowest KL divergence for each vocabulary size over five random seeds. The model is trained on $100,000$ samples with a sequence of length four and varying vocabulary size. Find the details of the dataset generation in the original paper.}
    \label{fig:toy_experiment_results}
\end{figure}
\clearpage

\section{Proofs}\label{app:proofs}
\newtheorem*{T2}{Theorem~\ref{theorem:update_equation_v2}}
\begin{T2}
    Given a prior belief $p(\x\mid \z)=\cat(\x\mid \softmax(\z))$, after observing $\y \sim \mathcal{N}(\y \mid  \mu = \x, \Sigma^2 = 1/\alpha I)$ at precision $\alpha$, the posterior belief is $p(\x\mid \z,\y,\alpha) =\cat(\x\mid \softmax(\z_\mathrm{post}))$ with
    \begin{equation}
        \z_\mathrm{post} = \z + \alpha \cdot \y
    \end{equation}
\end{T2}
\begin{proof}
    We need to compute the Bayesian update of the belief parameters. Each dimension can be considered independently since the noise is isotropic. Let us start with a single-variable prior belief $\cat(\softmax(\z))$ with $\z \in \mathbb{R}^c$, and a noisy observation  $\y \mid \x, \alpha \sim \mathcal{N}(\mu = \x, \Sigma ^2 = 1/\alpha \cdot I)$ of the true sample $\x \in \Delta^{c-1}$ at precision $\alpha$. Let us now consider any class $l \in {1,\dots,c}$. We write $\hat{e}_l$ for the one-hot encoding of class $l$. Since we are only interested in the ratio of the posterior probabilities, we can ignore any factors that do not depend on $l$ and normalize at the end. We have:
    \begin{align}
        p(\x=\hat{e}_l \mid \z)            & = \softmax(\z)_l \propto \exp(\z_{l})                                                             \\
        p(\y \mid \x=\hat{e}_l, \alpha)    & = \mathcal{N}(\y |
        \mu =\hat{e}_l, \Sigma ^2 = 1/\alpha \cdot I)                                                                                           \\
        p(\x=\hat{e}_l \mid \z, \y, \alpha) & = \propto p(\y \mid \x=\hat{e}_l, \alpha) \cdot p(\x=\hat{e}_l \mid \z)                                           \\
                                      & = \mathcal{N}(\y \mid \mu =\hat{e}_l, \Sigma ^2 = 1/\alpha \cdot I) \cdot \softmax(\z)_l             \\
                                      & \propto \exp\left(-\frac{||\y - \hat{e}_l||^2}{2 \cdot 1/\alpha}\right) \cdot \exp(\z_{l})                \\
                                      & = \exp\left(-\frac{||\y||^2 - 2 \cdot \langle \y, \hat{e}_l\rangle + ||\hat{e}_l||^2}{2 \cdot 1/\alpha} + \z_{l}\right) \\
                                      & \propto \exp\left(\alpha \cdot \y_{l} + \z_{l}\right)
    \end{align}

    Let us now normalize the results to obtain the posterior probabilities:
    \begin{align}
        p(\x=\hat{e}_l \mid \z, \y, \alpha) & = \frac{\exp\left(\alpha \cdot \y_{l} + \z_{l}\right)}{\sum_{l'=1}^c \exp\left(\alpha \cdot \y_{l'} + \z_{l'}\right)} = \softmax(\z + \alpha \cdot \y)_l
    \end{align}
    Putting everything together, we find that the posterior belief is $p(\x\mid \z,\y,\alpha) =\cat(\x\mid \softmax(\z_\mathrm{post}))$ with
    \begin{equation}
        \z_\mathrm{post} = \z + \alpha \cdot \y
    \end{equation}
\end{proof}

\newtheorem*{T3}{Theorem~\ref{theorem:bsi_elbo}}
\begin{T3}
    For categorical BSI, the log-likelihood of $\x$ under \cref{alg:bsi_sampling_categorical} is lower-bounded by
    \begin{equation}
        \log p(\x) \geq \EE\limits_{\substack{\z_k \sim q(\z|\x,t_k)}}[\log p(\x\mid\z_k)] -\frac{k}{2}  \EE\limits_{\substack{i \sim \mathcal{U}(0,k-1) \\ \z_i \sim q(\z|\x,t_i)}}[(\beta(t_{i+1})-\beta(t_i))||f_\theta(\z_i,t_i)-\x||_2 ^2],
    \end{equation}
    where $q(\z \mid \x, t) = \mathcal{N}(\z \mid \muzero + \beta(t)\x, \beta_0 + \beta(t) I)$.
\end{T3}
\begin{proof}
    For any distribution $p(\x)$ and any latent variable $\z$, i.e. any choice of prior $p(\z)$, encoding distribution $p(\z \mid \x)$, and likelihood $p(\x \mid \z)$, we have the variational lower bound
    \begin{equation}
        \log p(\x) \geq \EE_{\z \sim p(\z \mid \x)}[\log p(\x \mid \z)] - \mathrm{KL}(p(\z \mid \x) \| p(\z))
    \end{equation}
    on $\log p(\x)$ \cite{kingma2022autoencodingvariationalbayes}. We choose the beliefs $\z_0, \dots, \z_k$ as latent variables at the discretized time steps $t_0, \dots, t_k$. We choose the encoding distribution to be the distribution of the beliefs under \cref{alg:bsi_sampling_categorical} with the reconstruction network $f_\theta$ replaced by the true sample $\x$:
    \begin{equation}
        p(\z_0, \dots, \z_k \mid \x) = \mathcal{N}(\z_0 \mid \muzero, \beta_0 I) \prod_{i=0}^{k-1} p(\z_{i+1} \mid \z_i, \x, t_i)
    \end{equation}
    The transition distribution $p(\z_{i+1} \mid \z_i, \x, t_i)$ can be computed from \cref{theorem:update_equation_v2}:
    \begin{equation}
        \z_{i+1} = \z_i + \alpha_i \cdot \y_i \sim \z_i + \alpha_i \cdot \mathcal{N}(\y \mid \mu = \x, 1/\alpha_i I) = \mathcal{N}(\z_{i+1} \mid \z_i + \alpha_i \cdot \x, \alpha_i I)
    \end{equation}

    The distribution of $p(\z)$ following \cref{alg:bsi_sampling_categorical} factorizes similarly:
    \begin{equation}
        p(\z_0, \dots, \z_k) = \mathcal{N}(\z_0 \mid \muzero, \beta_0 I) \prod_{i=0}^{k-1} p(\z_{i+1} \mid \z_i, t_i, \theta)
    \end{equation}
    with the transition distribution
    \begin{equation}
        p(\z_{i+1} \mid \z_i, t_i, \theta) = \mathcal{N}(\z_{i+1} \mid \z_i + \alpha_i \cdot f_\theta(\z_i, t_i), \alpha_i I)
    \end{equation}
    Let us now compute the KL divergence:
    \begin{align}
         & \mathrm{KL}(p(\z_0, \dots, \z_k \mid \x) \| p(\z_0, \dots, \z_k))                                                                                                                                                                 \\
         & = \EE_{\substack{\z_0, \dots, \z_k \sim                                                                                                                                                                                     \\ p(\z_0, \dots, \z_k \mid \x)}}\left[\log \frac{p(\z_0, \dots, \z_k \mid \x)}{p(\z_0, \dots, \z_k)}\right] \\
         & = \EE_{\substack{\z_0, \dots, \z_k \sim                                                                                                                                                                                     \\ p(\z_0, \dots, \z_k \mid \x)}}\left[\log \frac{\mathcal{N}(\z_0 \mid \muzero, \beta_0 I) \prod_{i=0}^{k-1} p(\z_{i+1} \mid \z_i, \x, t_i)}{\mathcal{N}(\z_0 \mid \muzero, \beta_0 I) \prod_{i=0}^{k-1} p(\z_{i+1} \mid \z_i, t_i, \theta)}\right] \\
         & = \EE_{\substack{\z_0, \dots, \z_k \sim                                                                                                                                                                                     \\ p(\z_0, \dots, \z_k \mid \x)}}\left[\sum_{i=0}^{k-1} \log \frac{p(\z_{i+1} \mid \z_i, \x, t_i)}{p(\z_{i+1} \mid \z_i, t_i, \theta)}\right] \\
         & = \sum_{i=0}^{k-1} \EE_{\substack{\z_i \sim  p(\z_i \mid \x)}}\left[\mathrm{KL}(p(\z_{i+1} \mid \z_i, \x, t_i) \| p(\z_{i+1} \mid \z_i, t_i, \theta))\right]                                                                               \\
         & = \sum_{i=0}^{k-1} \EE_{\substack{\z_i \sim  p(\z_i \mid \x)}}\left[\mathrm{KL}(\mathcal{N}(\z_{i+1} \mid \z_i + \alpha_i \cdot \x, \alpha_i I) \| \mathcal{N}(\z_{i+1} \mid \z_i + \alpha_i \cdot f_\theta(\z_i, t_i), \alpha_i I))\right] \\
         & = \sum_{i=0}^{k-1} \EE_{\substack{\z_i \sim  p(\z_i \mid \x)}}\left[\frac{1}{2\alpha_i} ||\z_i + \alpha_i \cdot \x - (\z_i + \alpha_i \cdot f_\theta(\z_i, t_i))||_2^2\right]                                                       \\
         & = \sum_{i=0}^{k-1} \EE_{\substack{\z_i \sim  p(\z_i \mid \x)}}\left[\frac{\alpha_i}{2} ||\x - f_\theta(\z_i, t_i)||_2^2\right]                                                                                                    \\
         & = \sum_{i=0}^{k-1} \EE_{\substack{\z_i \sim  p(\z_i \mid \x)}}\left[(\beta(t_{i+1})-\beta(t_i))/2||\x - f_\theta(\z_i, t_i)||_2^2\right]                                                                                            \\
         & = \EE_{\substack{i \sim \mathcal{U}(0,k-1)                                                                                                                                                                                \\ \z_i \sim p(\z_i \mid \x)}}\left[\frac{k}{2} (\beta(t_{i+1})-\beta(t_i))||\x - f_\theta(\z_i, t_i)||_2^2\right]
    \end{align}
    Since $p(\x \mid \z_0, \dots, \z_k) = p(\x \mid \z_k) = \cat(\x \mid \softmax(\z_k))$, we can plug in \cref{eq:zi_marginal} to obtain the final result:

    \begin{equation}
        \log p(\x) \geq \EE\limits_{\substack{\z_k \sim q(\z|\x,t_k)}}[\log p(\x\mid\z_k)] -\frac{k}{2}  \EE\limits_{\substack{i \sim \mathcal{U}(0,k-1) \\ \z_i \sim q(\z|\x,t_i)}}[(\beta(t_{i+1})-\beta(t_i))||f_\theta(\z_i,t_i)-\x||_2 ^2],
    \end{equation}
    where $q(\z \mid \x, t) = \mathcal{N}(\z \mid \muzero + \beta(t)\x, \beta_0 + \beta(t) I)$.
\end{proof}

\newtheorem*{TSDE}{Theorem~\ref{theorem:sde}}
\begin{TSDE}
    As $\Delta t \to 0$, the update equation in \cref{theorem:update_equation_v2} converges to the following SDE:
    \begin{align}
        d\z_t & = \beta'(t) f_\theta(\z_t, t) dt + \sqrt{\beta'(t)} dW_t  %
    \end{align}
    where $dW_t$ is a Wiener process and $\z_{0}\sim \mathcal{N}(\muzero, \beta_0 \cdot I)$.
\end{TSDE}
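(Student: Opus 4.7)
}
The plan is to recognize the one-step Bayesian update as an Euler-Maruyama step for the stated SDE and then invoke standard convergence results for Euler-Maruyama schemes. Starting from \cref{theorem:update_equation_v2}, the measurement $\y_i \sim \mathcal{N}(f_\theta(\z_i,t_i),\,\alpha_i^{-1}I)$ can be written as $\y_i = f_\theta(\z_i,t_i) + \alpha_i^{-1/2}\xi_i$ with $\xi_i\sim\mathcal{N}(0,I)$ independent across $i$. The update $\z_{i+1} = \z_i + \alpha_i \y_i$ then becomes
\begin{equation}
  \z_{i+1}-\z_i \;=\; \alpha_i\, f_\theta(\z_i,t_i) \;+\; \sqrt{\alpha_i}\,\xi_i .
\end{equation}

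Next, I would Taylor-expand the precision increment. Since $\beta$ is assumed $C^1$, $\alpha_i = \beta(t_{i+1}) - \beta(t_i) = \beta'(t_i)\Delta t + o(\Delta t)$, and $\sqrt{\alpha_i} = \sqrt{\beta'(t_i)\,\Delta t}\,(1 + o(1))$. Substituting,
\begin{equation}
  \z_{i+1}-\z_i \;=\; \beta'(t_i)\, f_\theta(\z_i,t_i)\,\Delta t \;+\; \sqrt{\beta'(t_i)\,\Delta t}\,\xi_i \;+\; o(\Delta t).
\end{equation}
This is exactly one Euler-Maruyama step for the Itô SDE with drift $b(\z,t)=\beta'(t)f_\theta(\z,t)$ and diffusion coefficient $\sigma(t)=\sqrt{\beta'(t)}$, and the initial distribution $\z_0\sim\mathcal{N}(\muzero,\beta_0 I)$ coincides with the starting law in \cref{alg:bsi_sampling_categorical}.

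To complete the argument, I would appeal to a classical convergence theorem for the Euler-Maruyama scheme (e.g., Kloeden \& Platen, Thm.~10.2.2): if $b$ and $\sigma$ are Lipschitz in $\z$ uniformly in $t$ and continuous in $t$, then the piecewise-constant interpolation $\bar{\z}^{\Delta t}_t$ of $\{\z_i\}$ converges to the solution of the SDE, both in distribution over path space and in $L^2$ at any fixed time. Under the mild regularity assumption that $f_\theta(\cdot,t)$ is Lipschitz in $\z$ uniformly in $t$ (which holds for the architectures we use) and that $\beta\in C^1([0,1])$ with $\beta'$ bounded, these hypotheses are satisfied. The main obstacle is purely technical: making the Lipschitz regularity of $f_\theta$ explicit and handling the $o(\Delta t)$ remainder uniformly in $i$, which amounts to a standard $\Delta t^{1/2}$ bound on the local strong error and telescoping via Grönwall. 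Since the theorem is stated as a convergence-in-the-limit statement rather than a quantitative rate, the Taylor expansion plus the classical Euler-Maruyama convergence theorem suffice.
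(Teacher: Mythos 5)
Your proposal takes essentially the same route as the paper: rewrite the Bayesian update $\z_{i+1}=\z_i+\alpha_i\y_i$ with $\alpha_i=\beta(t_{i+1})-\beta(t_i)\approx\beta'(t_i)\Delta t$ and recognize it as the Euler--Maruyama step of the stated SDE. You are in fact somewhat more careful than the paper, which stops at identifying the discretization, whereas you additionally invoke the classical Euler--Maruyama convergence theorem with the needed Lipschitz/$C^1$ regularity — a welcome but not divergent strengthening.
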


\begin{proof}
    Take the update equation \cref{theorem:update_equation_v2} with an infinitesimal time step $\Delta t \to 0$, it holds that

    \begin{equation}
        \alpha = (\beta(t+\Delta t)-\beta(t)) \to \beta'(t) \Delta t
    \end{equation}
    Therefore, we have:
    \begin{align}
        \z_{t+\Delta t} &= \z_t + \alpha \y \\
                        &\sim \z_t + \alpha \mathcal{N}(\hat{\x}, \Sigma^2 = 1/\alpha I) \\
                        &= \z_t + \mathcal{N}(\alpha \hat{\x}, \Sigma^2 = \alpha I) \\
                        &\to \z_t + \beta'(t) \hat{\x}\Delta t + \sqrt{\beta'(t)}\sqrt{\Delta t}\cdot \mathcal{N}(0,\mathbf{I})
    \end{align}
    We identify this as the Euler-Maruyama discretization of the SDE above.
\end{proof}

\newtheorem*{T1}{Theorem~\ref{theorem:generalized_sde}}
\begin{T1}
    The SDE in \cref{theorem:sde} is generalized by the following family of SDEs with equal marginal densities $p_t(\z_t)$:
    \begin{align}
        d\z_t & = \beta'(t) f_\theta(\z_t, t) dt + \frac{\gamma -1}{2}\beta'(t)\nabla_{\z_t}\log p_t(\z_t)dt+\sqrt{\gamma\beta'(t)} dW_t
    \end{align}
    where $dW_t$ is a Wiener process and $\z_{0}\sim p(\z\mid t=0)$.
\end{T1}
\begin{proof}
    We need to show that the evolution of the probability density $p_t(\z_t)$ of \cref{eq:categorical_bsi_sde} matches that of \cref{eq:generalized_sde}. The evolution is characterized by the Fokker-Planck equation:
    \begin{align*}
         & \frac{\partial p_t(\z_t)}{\partial t} = \sum_j - \nabla_{\z_j}\left ( \beta'(t) f_\theta(\z_t, t) + \frac{\gamma -1}{2}\beta'(t)\nabla_{\z_t}\log p_t(\z_t) \right )p_t(\z_t) + \frac{1}{2}\gamma\beta'(t)\nabla_{\z_j}^2p_t(\z_t) \\
         & = \sum_j - \nabla_{\z_j}\left ( \beta'(t) f_\theta(\z_t, t)p_t(\z_t) \right ) - \frac{\gamma -1}{2}\beta'(t)\nabla_{\z_j} \left(p_t(\z_t)\nabla_{\z_j}\log p_t(\z_t)\right) + \frac{1}{2}\gamma\beta'(t)\nabla_{\z_j}^2p_t(\z_t)    \\
         & = \sum_j - \nabla_{\z_j}\left ( \beta'(t) f_\theta(\z_t, t)p_t(\z_t) \right ) - \frac{\gamma -1}{2}\beta'(t)\nabla_{\z_j}^2  p_t(\z_t) + \frac{1}{2}\gamma\beta'(t)\nabla_{\z_j}^2p_t(\z_t)                                       \\
         & = \sum_j - \nabla_{\z_j}\left ( \beta'(t) f_\theta(\z_t, t)p_t(\z_t) \right ) + \frac{1}{2}\beta'(t)\nabla_{\z_j}^2p_t(\z_t)
    \end{align*}
    Which equals the Fokker-Planck equation of the SDE in \cref{eq:categorical_bsi_sde}.
\end{proof}

\newtheorem*{T4}{Theorem~\ref{theorem:score_function}}
\begin{T4}
    The BSI loss \cref{eq:categorical_bsi_loss} also is a score matching loss with the score model $s_\theta(\z,t)$ parameterized as
    \begin{equation}
    \label{eq:score_function_appendix}
        s_\theta(\z, t) \equiv \frac{\muzero + \beta(t)f_{\theta}(\z,t)-\z}{\beta(t) + \beta_0}  \overset{!}{\approx} \nabla_{\z} \log p_t(\z)
    \end{equation}
\end{T4}

\begin{proof}
    Score matching \cite{song2021scorebasedgenerativemodelingstochastic} is a generative model that learns to approximate the score function $\nabla_{\z} \log p_t(\z)$ of a distribution $p_t(\z)$ by minimizing the score matching loss:
    \begin{equation}
        \mathcal{L}_\mathtt{score} \equiv \mathbb{E}_{t \sim \mathcal{U}(0,1)}[\lambda(t) \mathbb{E}_{p(\x)} \mathbb{E}_{p_t(\z \mid \x)} \left[ \|s_\theta(\z, t) - \nabla_{\z} \log p_t(\z \mid \x)\|_2^2 \right]]
    \end{equation}
    where $\lambda: [0,1] \mapsto \mathbb{R}^+$ is a positive weighting function. The distribution $p_t(\z \mid \x)$ is the distribution of the latent variable at time $t$ given the true sample $\x$.
    For categorical BSI, we have from \cref{eq:zi_marginal}:
    \begin{equation}
        p_t(\z \mid \x) = \mathcal{N}(\z \mid \muzero + \beta(t)\x, (\beta_0 + \beta(t)) I)
    \end{equation}

    The score function of an isotropic Gaussian can be computed in closed form:
    \begin{align}
        \nabla_{\z} \log \mathcal{N}(\z \mid \mu, \sigma^2 I) & = \nabla_{\z} \left(-\frac{||\z - \mu||^2}{2\sigma^2}\right) = -\frac{\z - \mu}{\sigma^2} \\
    \end{align}
    Plugging in the parameters of $p_t(\z \mid \x)$, we find:
    \begin{align}
        \nabla_{\z} \log p_t(\z \mid \x) & = -\frac{\z - (\muzero + \beta(t)\x)}{\beta_0 + \beta(t)} = \frac{\muzero + \beta(t)\x - \z}{\beta_0 + \beta(t)}
    \end{align}
    With the proposed score model parameterization $s_\theta(\z,t)$, we find:
    \begin{align}
         & \mathcal{L}_\mathtt{score} = \mathbb{E}_{t \sim \mathcal{U}(0,1)}[\lambda(t) \mathbb{E}_{p(\x)} \mathbb{E}_{p_t(\z \mid \x)} \left[ \|s_\theta(\z, t) - \nabla_{\z} \log p_t(\z \mid \x)\|_2^2 \right]]                                                       \\
         & = \mathbb{E}_{t \sim \mathcal{U}(0,1)}[\lambda(t) \mathbb{E}_{p(\x)} \mathbb{E}_{p_t(\z \mid \x)} \left[ \left\|\frac{\muzero + \beta(t)f_{\theta}(\z,t)-\z}{\beta(t) + \beta_0} - \frac{\muzero + \beta(t)\x - \z}{\beta_0 + \beta(t)}\right\|_2^2 \right]] \\
         & = \mathbb{E}_{t \sim \mathcal{U}(0,1)}[\lambda(t) \mathbb{E}_{p(\x)} \mathbb{E}_{p_t(\z \mid \x)} \left[ \left\|\frac{\beta(t)(f_{\theta}(\z,t)-\x)}{\beta(t) + \beta_0}\right\|_2^2 \right]]       \\
         &= \mathbb{E}_{t \sim \mathcal{U}(0,1)}[\lambda(t)\frac{\beta(t)^2}{(\beta(t) + \beta_0)^2} \mathbb{E}_{p(\x)} \mathbb{E}_{p_t(\z \mid \x)} \left[ \left\|(f_{\theta}(\z,t)-\x)\right\|_2^2 \right]]       \\
    \end{align}
    Choosing the weighting
    \begin{equation}
        \lambda(t) = \beta'(t) \frac{(\beta(t) + \beta_0)^2}{2\beta(t)^2},
    \end{equation}
    \label{eq:score_weighting}
    we find that the score matching loss equals the BSI loss in \cref{eq:categorical_bsi_loss}. 
Therefore, the BSI loss in \cref{eq:categorical_bsi_loss} is a score-matching loss with the weighting \cref{eq:score_weighting} and the score function $s_\theta(\z, t)$ parameterized as in \cref{eq:score_function_appendix}.
    
\end{proof}

\newtheorem*{T5}{Theorem~\ref{theorem:discretized_sde}}

\begin{T5}
    Fixing the prediction $\hat{\x}=f_\theta(\z_t, t)$ and the values $\beta = \beta(t+\Delta t / 2)$, $\beta' = \beta'(t+\Delta t / 2)$ in \cref{eq:generalized_sde}  in a time interval $[t, t+\Delta t]$ yields an Ornstein-Uhlenbeck process with the exact marginal

    \begin{equation}
        \z_{t + \Delta t} \sim m + (\z_t - m) e^{-\kappa \Delta t} + \sqrt{\frac{\gamma \beta'}{2 \kappa}(1 - e^{-2 \kappa \Delta t})} \cdot \mathcal{N}(0, I),
    \end{equation}
    where $\kappa = \frac{(\gamma-1)\beta'}{2(\beta_0+\beta)}$, $m=\muzero+(\beta + \beta'/\kappa)\hat{\x}$.
\end{T5}

\begin{proof}
    The SDE in \cref{eq:generalized_sde} with fixed parameters $\beta, \beta', \hat{\x}$ is given as
    \begin{align}
        d\z_t & = \beta' \hat{\x} dt + \frac{\gamma -1}{2}\beta'\nabla_{\z_t}\log p_t(\z_t)dt+\sqrt{\gamma\beta'} dW_t
    \end{align}
    where $dW_t$ is a Wiener process and $\z_{t}\sim p(\z\mid t)$. Let us insert \cref{theorem:score_function} to obtain
    \begin{align}
        d\z_t & = \beta' \hat{\x} dt + \frac{\gamma -1}{2}\beta' \frac{\muzero + \beta f_{\theta}(\z_t,t)-\z_t}{\beta + \beta_0} dt+\sqrt{\gamma\beta'} dW_t \\
              & = \frac{(\gamma-1)\beta'}{2(\beta_0+\beta)}\left(\muzero + \left(\beta + \frac{2(\beta_0+\beta)}{\gamma-1}\right)\hat{\x}-\z_t\right) dt+\sqrt{\gamma\beta'} dW_t
    \end{align}
    Setting $\kappa = \frac{(\gamma-1)\beta'}{2(\beta_0+\beta)}$ and $m=\muzero+(\beta + \beta'/\kappa)\hat{\x}$, we find
    \begin{equation}
        d\z_t = \kappa(m - \z_t) dt + \sqrt{\gamma \beta'} dW_t
    \end{equation}
    which is an Ornstein-Uhlenbeck process. The exact marginal distribution of an Ornstein-Uhlenbeck process is given as \cite{uhlenbeck1930theory}:
    \begin{equation}
        \z_{t + \Delta t} \sim m + (\z_t - m) e^{-\kappa \Delta t} + \sqrt{\frac{\gamma \beta'}{2 \kappa}(1 - e^{-2 \kappa \Delta t})} \cdot \mathcal{N}(0, I)
    \end{equation}
\end{proof}

\section{Additional Results}\label{app:extra_results}

\begin{table}
    \centering
    \caption{Results on the QM9 dataset.}
    \includegraphics[width=\textwidth]{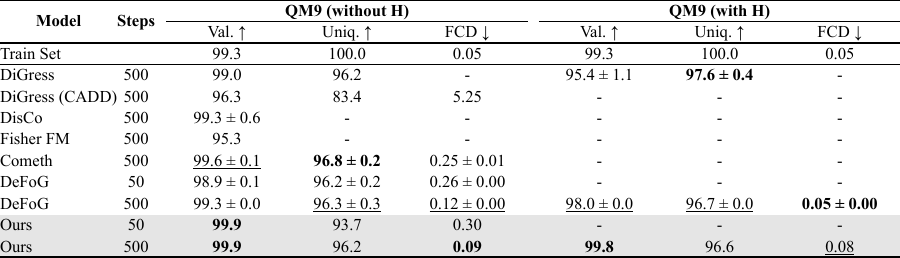}
    \label{fig:qm9_results}
\end{table}

\cref{fig:qm9_results} shows our method is competitive on the QM9 dataset with removed hydrogen atoms, achieving state-of-the-art results on validity and FCD. We explicitly model charges on the nodes, enabling high validity scores.

\begin{table}
    \centering
    \caption{Hyperparameters used for the results in \cref{tab:molecule_results,tab:synthetic_graph_results}. The precision schedule is parameterized as $\beta(t)=\beta_\mathtt{start}\cdot (\exp(t \cdot \log(\beta_\mathtt{end}/\beta_\mathtt{start}))-1)$.}\label{tab:hyperparameters}
    \includegraphics[width=\textwidth]{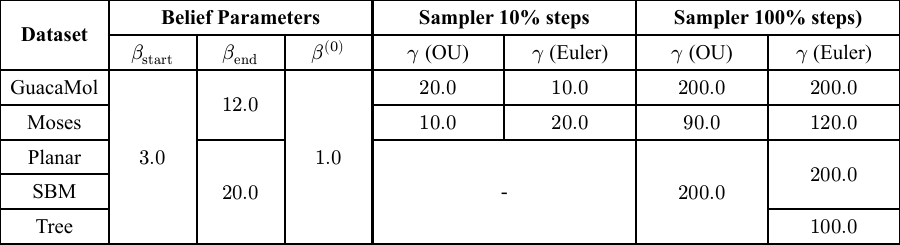}
\end{table}

\begin{table}
\centering
\caption{Datasets with training samples and maximum number of nodes. For Moses, we use the \texttt {test\_scaffolds} split for benchmarking, which is the standard test split.}
\label{tab:datasets}
\begin{tabular}{lcc}
\toprule
\textbf{Dataset} & \textbf{Train samples} & \textbf{Max. Nodes} \\
\midrule
GuacaMol~\citep{guacamol} & 1.3M & 88 \\
Moses~\citep{moses} & 1.6M & 30 \\
Planar~\citep{martinkus2022spectrespectralconditioninghelps} & 128 & 64 \\
SBM~\citep{martinkus2022spectrespectralconditioninghelps} & 128 & 187 \\
Tree~\citep{bergmeister2024efficientscalablegraphgeneration} & 128 & 64 \\
\bottomrule
\end{tabular}
\end{table}

\begin{table}
    \centering
    \caption{Molecular metrics}\label{tab:molecular_metrics}
    \includegraphics[width=\textwidth]{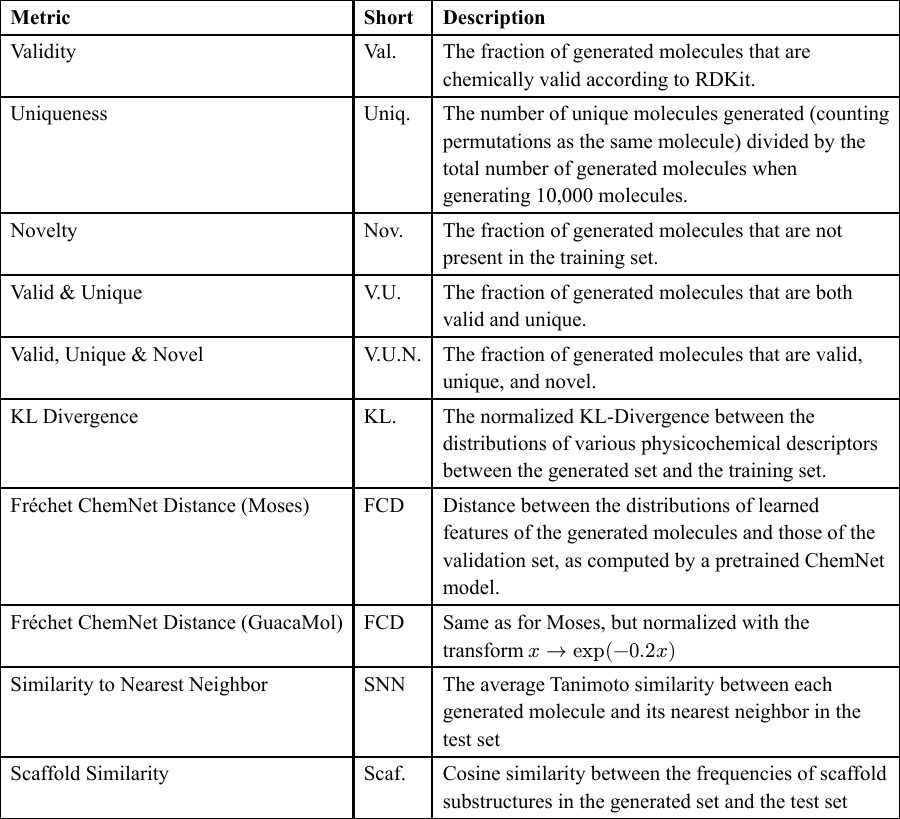}
\end{table}

\begin{table}
    \centering
    \caption{Synthetic graph metrics metrics}\label{tab:synthetic_graph_metrics}
    \includegraphics[width=\textwidth]{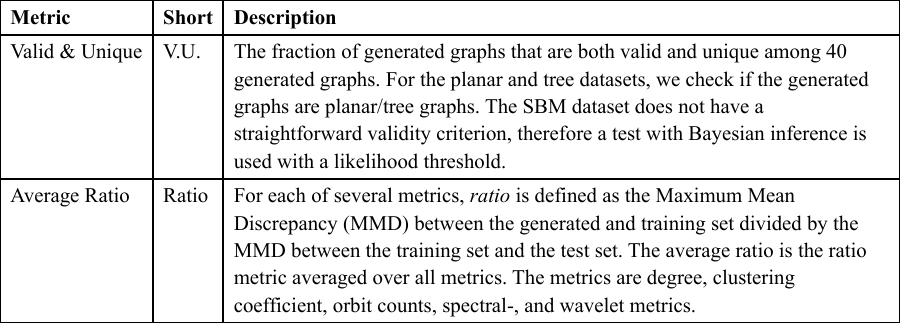}
\end{table}

\end{document}